\newcommand{\mat}[1]{\ensuremath{\mathbf{#1}}}
\newcommand{\trans}{^{\top}}
\newcommand{\poly}{\mathrm{poly}}
\newcommand{\cO}{\mathcal{O}}
\newcommand{\tlO}{\mathcal{\tilde{O}}}
\newcommand{\cD}{\mathcal{D}}
\newcommand{\cH}{\mathcal{H}}
\newcommand{\cS}{\mathcal{S}}
\newcommand{\cA}{\mathcal{A}}
\newcommand{\cB}{\mathcal{B}}
\newcommand{\ind}[3]{{#1}^{#2}_{#3}} 
\newcommand{\sind}[3]{{#1}^{#2}_{#3}} 
\def\shownotes{1}  
\newcommand{\authnote}[2]{$\ll$\textsf{\footnotesize #1 notes: #2}$\gg$}
\newcommand{\authnote}[2]{}
\renewcommand{\setto}{\leftarrow}
\newcommand{\Reg}{{\rm Regret}}
\newcommand{\WeakReg}{{\rm WeakRegret}}
\newcommand{\up}{{\sf up}}
\newcommand{\low}{{\sf low}}
\newcommand{\mdp}{{\rm MG}}
\newcommand{\nashgeneral}{\textsc{Nash\_General\_Sum}}
\newcommand{\nashzero}{\textsc{Nash\_Zero\_Sum}}
\newcommand{\explore}{\textsc{Reward\_Free\_Exploration}}
\newcommand{\Unif}{{\rm Unif}}
\newcommand{\EULER}{\textsc{EULER}}
\renewcommand{\cite}{\citep}
\def\blfootnote{\gdef\@thefnmark{}\@footnotetext}
\newtheorem*{rep@theorem}{\rep@title}
\newcommand{\newreptheorem}[2]{%
\newenvironment{rep#1}[1]{%
 \def\rep@title{#2 \ref{##1}}%
 \begin{rep@theorem}}%
 {\end{rep@theorem}}}
\colorlet{linkequation}{blue}
\title{\bf{Provable Self-Play Algorithms for Competitive Reinforcement Learning}}
\author{
  Yu Bai\thanks{Salesforce Research. \texttt{yu.bai@salesforce.com}}
  \and
  Chi Jin\thanks{Princeton University. \texttt{chij@princeton.edu}}
}
\date{\today}
\begin{document}
\maketitle

\renewcommand{\sec}{Sections_arXiv}


\begin{abstract}
  Self-play, where the algorithm learns by playing against itself without requiring any direct supervision, has become the new weapon in modern Reinforcement Learning (RL) for achieving superhuman performance in practice. However, the majority of exisiting theory in reinforcement learning only applies to the setting where the agent plays against a fixed environment; it remains largely open whether self-play algorithms can be provably effective, especially when it is necessary to manage the exploration/exploitation tradeoff. We study self-play in competitive reinforcement learning under the setting of Markov games, a generalization of Markov decision processes to the two-player case. We introduce a self-play algorithm---Value Iteration with Upper/Lower Confidence Bound (VI-ULCB)---and show that it achieves regret $\tlO(\sqrt{T})$ after playing $T$ steps of the game, where the regret is measured by the agent's performance against a \emph{fully adversarial} opponent who can exploit the agent's strategy at \emph{any} step. We also introduce an explore-then-exploit style algorithm, which achieves a slightly worse regret of $\tlO(T^{2/3})$, but is guaranteed to run in polynomial time even in the worst case. To the best of our knowledge, our work presents the first line of provably sample-efficient self-play algorithms for competitive reinforcement learning.

\end{abstract}
\section{Introduction}
This paper studies \emph{competitive reinforcement learning}
(competitive RL), that is, reinforcement learning with two or more
agents taking actions simultaneously, but each maximizing
their own reward.
Competitive RL is a major branch of the more general
setting of multi-agent reinforcement learning (MARL), with the
specification that the agents have conflicting rewards (so that they
essentially compete with each other) yet can be trained in a
centralized fashion (i.e. each agent has access to the other agents'
policies)~ ~\cite{crandall2005learning}.

There are substantial recent progresses in competitive RL, in
particular in solving hard multi-player games such as
GO~\citep{silver2017mastering},
Starcraft~\citep{vinyals2019grandmaster}, and Dota
2~\citep{openaidota}.  A key highlight in their approaches is the
successful use of \emph{self-play} for achieving super-human
performance in absence of human knowledge or expert opponents. These
self-play algorithms are able to learn a good policy for all players
from scratch through repeatedly playing the current policies against
each other and performing policy updates using these self-played game
trajectories. The empirical success of self-play has challenged the
conventional wisdom that expert opponents are necessary for achieving
good performance, and calls for a better theoretical understanding.

In this paper, we take initial steps towards understanding the
effectiveness of self-play algorithms in competitive RL from a
theoretical perspective. We focus on the special case of two-player
zero-sum Markov games~\citep{shapley1953stochastic,littman1994markov},
a generalization of Markov Decision Processes (MDPs) to the two-player
setting. In a Markov game, the two players share states,
play actions simultaneously, and observe the same reward. However, one player
aims to maximize the return while the other aims to minimize it.  This
setting covers the majority of two-player games including GO (there is
a single reward of $\set{+1, -1}$ at the end of the game indicating
which player has won), and also generalizes zero-sum matrix
games~\citep{neumann1928theorie}---an important game-theoretic
problem---into the multi-step (RL) case. 

More concretely, the goal of this paper is to design low-regret
algorithms for solving episodic two-player Markov games in the
general setting~\citep{kearns2002near}, that is, the
algorithm is allowed to play the game for a fixed amount of episodes
using arbitrary policies, and its performance is measured in terms of
the regret.  We consider a strong notion of regret for two-player
zero-sum games, where the performance of the deployed policies in each
episode is measured against the best response \emph{for that
  policy}, which can be different in different episodes. Such a regret
bound measures the algorithm's ability in managing the exploration and
exploitation tradeoff against fully adaptive opponents, and can
directly translate to other types of guarantees such as the PAC sample
complexity bound. 






\begin{table*}[!t]
  \renewcommand{\arraystretch}{1.6}
  \label{table:rate}
  \centering
  \caption{Regret and PAC guarantees of the Algorithms in this paper for zero-sum Markov games.}

  \label{tab:results_cc} \small
  
  \begin{tabular}{|>{\centering}m{2cm}|>{\centering}m{4.2cm}|c|c|c|} \hline
 \textbf{Settings} & \textbf{Algorithm} & \textbf{Regret} & \textbf{PAC} & \textbf{Runtime}\\ \hline 
 \multirow{3}{*}{\shortstack{General\\Markov Game}} & VI-ULCB (Theorem \ref{thm:main_sim}) & $\tlO(\sqrt{H^3 S^2 ABT})$ & $\tlO(H^4 S^2 AB/\epsilon^2)$ & PPAD-complete\\ \cline{2-5}
 & VI-explore (Theorem \ref{thm:PAC_explore}) & $\tlO((H^5S^2ABT^2)^{1/3})$ & $\tlO(H^5S^2AB/\epsilon^2)$ & \multirow{4}{*}{Polynomial} \\ \cline{2-4}
 & Mirror Descent ($H=1$) \citep{rakhlin2013optimization}  & $\tlO(\sqrt{S(A+B)T})$ & $\tlO(S(A+B)/\epsilon^2)$ & \\ \cline{1-4}
 \multirow{2}{*}{\shortstack{Turn-Based\\Markov Game}} & VI-ULCB (Corollary \ref{cor:main_turn}) & $\tlO(\sqrt{H^3S^2(A+B)T})$ & $\tlO(H^4S^2(A+B)/\epsilon^2)$ & \\ \cline{2-4}
 & Mirror Descent ($H=2$) (Theorem \ref{theorem:turn-based-mirror-descent})   & $\tlO(\sqrt{S(A+B)T})$ & $\tlO(S(A+B)/\epsilon^2)$ & \\ \hline
 Both & Lower Bound (Corollary \ref{cor:lowerbound}) & $\Omega(\sqrt{H^2S(A+B)T})$ & $\Omega(H^2S(A+B)/\epsilon^2)$ & - \\ \hline
\end{tabular}
\end{table*}

\paragraph{Our contribution}
This paper introduces the first line of provably sample-efficient
self-play algorithms for zero-sum Markov game under no restrictive
assumptions. Concretely, 
\begin{itemize}
\item We introduce the first self-play algorithm with $\tlO(\sqrt{T})$
  regret for zero-sum Markov games. More specifically, it achieves
  $\tlO(\sqrt{H^3S^2ABT})$ regret in the general case, where $H$ is
  the length of the game, $S$ is the number of states, $A, B$ are the
  number of actions for each player, and $T$ is the total number of
  steps played. In special case of turn-based games, it achieves
  $\tlO(\sqrt{H^3S^2(A+B)T})$ regret with guaranteed polynomial
  runtime.
\item We also introduce an explore-then-exploit style algorithm. It has guaranteed polynomial runtime in the general setting of zero-sum Markov games, with a slightly worse $\tlO(T^{2/3})$ regret.

\item We raise the open question about the optimal dependency of the
  regret on $S, A, B$. We provide a lower bound
  $\Omega(\sqrt{S(A+B)T})$, and show that the lower bound can be
  achieved in simple case of two-step turn-based games by a mirror
  descent style algorithm. 

\end{itemize}
Above results are summarized in Table \ref{table:rate}.



\subsection{Related Work}

There is a fast-growing body of work on multi-agent reinforcement learning (MARL). Many of them achieve striking empirical performance, or attack MARL in the cooperative setting, where agents are optimizing for a shared or similar reward. We refer the readers to several recent surveys for these results \citep[see e.g.][]{bucsoniu2010multi,nguyen2018deep, oroojlooyjadid2019review, zhang2019multi}. In the rest of this section we focus on theoretical results related to competitive RL.

\paragraph{Markov games}
Markov games (or stochastic games) is proposed as a mathematical model for compeitive RL back in the early 1950s~\cite{shapley1953stochastic}.
There is a long line of classical work since then on solving this problem \citep[see e.g.][]{littman1994markov, littman2001friend, hu2003nash,hansen2013strategy}. They design algorithms, possibly with runtime guarantees, to find optimal policies in Markov games when both the transition matrix and reward are known, or in the asymptotic setting where number of data goes to infinity. These results do not directly apply to the non-asymptotic setting where the transition and reward are unknown and only a limited amount of data are available for estimating them.

A few recent work tackles self-play algorithms for Markov games in the non-asymptotic setting, working under either structural assumptions about the game or stronger sampling oracles.
\citet{wei2017online} propose an upper confidence algorithm for stochastic games and prove that a self-play style algorithm finds $\epsilon$-optimal policies in $\poly(1/\epsilon)$ samples. \citet{jia2019feature, sidford2019solving} study turn-based stochastic games---a special case of general Markov games, and propose algorithms with near-optimal sample complexity. However, both lines of work make strong assumptions---on either the structure of Markov games or how we access data---that are not always true in practice. Specifically, \citet{wei2017online} assumes no matter what strategy one agent sticks to, the other agent can always reach all states by playing a certain policy, and \citet{jia2019feature, sidford2019solving} assume access to simulators (or generative models) which enable the agent to directly sample transition and reward information for any state-action pair. These assumptions greatly alleviate the challenge in exploration. In contrast, our results apply to general Markov games without further structural assumptions, and our algorithms have built-in mechanisms for solving the challenge in the exploration-exploitation tradeoff. 

Finally, we note that classical \textsc{R-max} algorithm \citep{brafman2002r} does not make restrictive assumptions. It also has provable guarantees even when playing against the adversarial opponent in Markov game. However, the theoretical guarantee in \citet{brafman2002r} is weaker than the standard regret, and does not directly imply any self-play algorithm with regret bound in our setting (See Section \ref{app:connection} for more details).

\paragraph{Adversarial MDP}
Another line of related work focuses on provable algorithms against \emph{adversarial opponents} in MDP. Most work in this line considers the setting with adversarial rewards \citep[see e.g.][]{zimin2013online, rosenberg2019online, jin2019learning}. These results do not direcly imply provable self-play algorithms in our setting, because the adversarial opponent in Markov games can affect both the reward and the transition. There exist a few works that tackle both adversarial transition functions and adversarial rewards~\citep{yu2009arbitrarily, cheung2019reinforcement, lykouris2019corruption}. In particular, \citet{lykouris2019corruption} considers a stochastic problem with $C$ episodes arbitrarily corrupted and obtain $\cO(C\sqrt{T}+C^2)$ regret. When applying these results to Markov games with an adversarial opponent, $C$ can be $\Theta(T)$ without further assumptions, which makes the bound vacuous.


\paragraph{Single-agent RL}
There is an extensive body of research on the sample efficiency of reinforcement learning in the single agent setting \citep[see e.g.][]{jaksch2010near, osband2014generalization, azar2017minimax, dann2017unifying, strehl2006pac, jin2018q}, which are studied under the model of Markov decision process---a special case of Markov games. For the tabular episodic setting with nonstationary dynamics and no simulators, the best regrets achieved by existing model-based and model-free algorithms are $\tilde{\mathcal{O}}(\sqrt{H^2 S A T})$ \cite{azar2017minimax} and $\tilde{\mathcal{O}}(\sqrt{H^3 S A T})$ \cite{jin2018q}, respectively, where $S$ is the number of states, $A$ is the number of actions, $H$ is the length of each episode, and $T$ is the total number of steps played. Both of them (nearly) match the minimax lower bound $\Omega(\sqrt{H^2 S A T})$~\cite{jaksch2010near, osband2016lower, jin2018q}.

\section{Preliminaries} \label{sec:prelim}

In this paper, we consider zero-sum Markov
Games (MG)~\citep{shapley1953stochastic, littman1994markov}, which also known as stochastic games in the literature. Zero-sum Markov games are generalization of standard Markov Decision Processes (MDP) into the two-player setting, in which the \emph{max-player} seeks to maximize the total return and the \emph{min-player} seeks to minimize the total return. 


Formally, we consider tabular episodic zero-sum Markov games of the
form $\mdp(H, \cS, \cA, \cB, \P, r)$, where
\begin{itemize}
\item $H$ is the number of steps in each episode.
 \item $\cS = \cup_{h\in[H+1]}\cS_h$, and $\cS_h$ is the set of states at step $h$, with $\max_{h\in[H+1]}|\cS_h| \le S$.
 \item $\cA = \cup_{h\in[H]}\cA_h$, and $\cA_h$ is the set of actions of the max-player at step $h$,
   with $\max_{h\in[H]}|\cA_h| \le A$.
 \item $\cB = \cup_{h\in[H]}\cB_h$, and $\cB_h$ is the set of actions of the min-player at step $h$,
   with $\max_{h\in[H]}|\cB_h| \le B$.
 \item $\P = \{\P_h\}_{h\in[H]}$ is a collection of transition matrices, so that $\P_h ( \cdot | s, a, b) $
   gives the distribution over states if action pair $(a, b)$ is taken for state $s$ at step $h$.
 \item $r = \{r_h\}_{h\in[H]}$ is a collection of reward functions, and $r_h \colon \cS_h \times \cA_h \times \cB_h \to [0,1]$ is the
   deterministic reward function at step $h$. Note that we are assuming that rewards are in $[0,1]$ for normalization. 
   \footnote{While we study deterministic reward functions for notational simplicity, our results generalize to randomized reward functions.}  
\end{itemize}


In each episode of this MG, an initial state $s_1$ is picked
arbitrarily by an adversary.  Then, at each step $h \in [H]$, both
players observe state $s_h \in \cS_h$, pick the action
$a_h \in \cA_h, b_h \in \cB_h$ simultaneously, receive reward
$r_h(s_h, a_h, b_h)$, and then transition to the next state
$s_{h+1}\sim\P_h(\cdot | s_h, a_h, b_h)$. The episode ends when
$s_{H+1}$ is reached.

\paragraph{Policy and value function}
A policy $\mu$ of the max-player is a collection of $H$ functions
$\big\{ \mu_h: \cS \rightarrow \Delta_{\cA_h} \big\}_{h\in [H]}$, where
$\Delta_{\cA_h}$ is the probability simplex over action set
$\cA_h$. Similarly, a policy $\nu$ of the min-player is a collection of
$H$ functions
$\big\{ \nu_h: \cS \rightarrow \Delta_{\cB_h} \big\}_{h\in [H]}$. We use
the notation $\mu_h(a|s)$ and $\nu_h(b|s)$ to present the probability
of taking action $a$ or $b$ for state $s$ at step $h$ under policy $\mu$ or $\nu$
respectively.  We use
$\sind{V}{\mu, \nu}{h} \colon \cS_h \to \mathbb{R}$ to denote the value
function at step $h$ under policy $\mu$ and $\nu$, so that
$\sind{V}{\mu, \nu}{h}(s)$ gives the expected cumulative rewards
received under policy $\mu$ and $\nu$, starting from $s_h = s$, until the end of
the episode:
\begin{equation*}
  \sind{V}{\mu, \nu}{h}(s) \defeq \E_{\mu, \nu}\left[\left.\sum_{h' =
        h}^H r_{h'}(s_{h'}, a_{h'}, b_{h'}) \right| s_h = s\right]. 
\end{equation*}
We also define
$\ind{Q}{\mu, \nu}{h}:\cS_h \times \cA_h \times \cB_h \to \mathbb{R}$
to denote $Q$-value function at step $h$ so that
$\sind{Q}{\mu, \nu}{h}(s, a)$ gives the cumulative
rewards received under policy $\mu$ and $\nu$, starting from
$s_h = s, a_h = a, b_h = b$, till the end of the episode:
\begin{align*}
  \sind{Q}{\mu, \nu}{h}(s, a, b) \defeq  \E_{\mu,
    \nu}\left[\left.\sum_{h' = h}^H r_{h'}(s_{h'},  a_{h'}, b_{h'})
    \right| s_h = s, a_h = a, b_h = b\right]. 
\end{align*}
For simplicity, we use notation of operator $\P_h$ so that
$[\P_h V](s, a, b) \defeq \E_{s' \sim \P_h(\cdot|s, a,
  b)}V(s')$ for any value function $V$. By definition of value functions, for all
$(s, a, b, h) \in \cS_h \times \cA_h \times \cB_h \times [H]$, we have the Bellman
equation
\begin{align}
  \sind{Q}{\mu, \nu}{h}(s, a, b) =
  & ~(r_h + \P_h \sind{V}{\mu, \nu}{h+1})(s, a, b), \label{eq:bellman1}
  \\ 
  \sind{V}{\mu, \nu}{h}(s)
  = & ~\sum_{a, b} \mu_h(a|s) \nu_h(b|s) \sind{Q}{\mu, \nu}{h}(s, a,
      b). \label{eq:bellman2}
\end{align}
where we define $V^{\mu, \nu}_{H+1}(s) = 0$ for all $s \in \cS_{H+1}$

\paragraph{Best response and regret}
We now define the notion of best response and review some
basic properties of it (cf.~\citep{filar2012competitive}), which will
motivate our definition of the regret in two-player Markov games. For any
max-player strategy $\mu$, there exists a \emph{best response} of the min-player, which is a policy
$\nu^\dagger(\mu)$ satisfying
$V_h^{\mu, \nu^\dagger(\mu)}(s) = \inf_{\nu} V_h^{\mu, \nu}(s)$ for
any $(s, h)$. For simplicity, we denote
$V_h^{\mu, \dagger} \defeq V_h^{\mu, \nu^\dagger(\mu)}$. By symmetry, we
can define the best response of the max-player $\mu^\dagger(\nu)$, and define $V_h^{\dagger, \nu}$.  The value functions $V_h^{\mu, \dagger}$
and $V_h^{\dagger, \nu}$ satisfy the following Bellman optimality
equations:
\begin{align}
  \sind{V}{\mu, \dagger}{h}(s) =
  & \inf_{\nu \in \Delta_{\cB_h}}\sum_{a,b} \mu_h(a|s) \nu(b) \sind{Q}{\mu,
    \dagger}{h}(s, a, b), \\ 
  \sind{V}{\dagger, \nu}{h}(s) =
  & \sup_{\mu \in \Delta_{\cA_h}}\sum_{a, b} \mu(a) \nu_h(b|s)
    \sind{Q}{\dagger, \nu}{h}(s, a, b).
\end{align}
It is further known that there exist policies $\mu^\star$, $\nu^\star$
that are optimal against the best responses of the opponent:
\begin{equation*}
  \begin{aligned}
    & V^{\mu^\star, \dagger}_h(s) = \sup_{\mu}
    V^{\mu, \dagger}_h(s), \\
    &  V^{\dagger, \nu^\star}_h(s) = \inf_{\nu}
    V^{\dagger, \nu}_h(s),
  \end{aligned}
  \qquad \textrm{for all}~(s, h).
\end{equation*}
It is also known that, for any $(s, h)$, the minimax
theorem holds: 
\begin{equation*}
\sup_{\mu} \inf_{\nu} V^{\mu, \nu}_h(s) = V^{\mu^\star, \nu^\star}_h(s) = \inf_{\nu} \sup_{\mu} V^{\mu, \nu}_h(s).
\end{equation*}
Therefore, the optimal strategies $(\mu^\star,\nu^\star)$ are also
the Nash Equilibrium for the Markov game. Based on this, it is sensible to
measure the suboptimality of any pair of policies $(\hat{\mu},
\hat{\nu})$ using the gap between their performance and the
performance of the optimal strategy when playing against the best
responses respectively, i.e.,
\begin{equation}
  \label{equation:regret-decompose-1}
  \begin{aligned}
    \brac{V^{\dagger, \hat{\nu}}_h(s) - \inf_{\nu} V^{\dagger,
        \nu}_h(s)} + \brac{\sup_{\mu} V^{\mu, \dagger}_h(s) -
      V^{\hat{\mu},\dagger}_h(s)}
    = V^{\dagger, \hat{\nu}}_h(s) - V^{\hat{\mu}, \dagger}_h(s).  
  \end{aligned}
\end{equation}
We make this formal in the following definition of the regret.
\begin{definition}[Regret]
  For any algorithm that plays the Markov game for $K$ episodes
  with (potentially adversarial) starting state $s_1^k$ for each episode
  $k=1,2,\dots,K$, the regret is defined as
  \begin{equation*}
    \Reg(K) = \sum_{k=1}^K \left[\sind{V}{\dagger, \nu^k}{1}
      (\ind{s}{k}{1}) - \sind{V}{\mu^k, \dagger}{1} (\ind{s}{k}{1})\right],
  \end{equation*}
  where $(\mu^k$, $\nu^k)$ denote the policies deployed by the algorithm
  in the $k$-th episode.
\end{definition}

We note that as a unique feature of self-play algorithms, the learner is playing against herself, and thus chooses strategies for both max-player and min-player at each episode.



\subsection{Turn-based games}
\label{section:turn-based}
In zero-sum Markov games, each step involves the two players playing
simultaneously and independently. It is a general framework, which contains a very important special case---\emph{turn-based} games.~\citep{shapley1953stochastic,jia2019feature}.


The main feature of a turn-based game is that
only one player is taking actions in each step; in other words, 
the max and min player take turns to play the game. Formally, a
turn-based game can be defined through a partition of steps
$[H]$ into two sets $\cH_{\max}$ and $\cH_{\min}$, where $\cH_{\max}$ and $\cH_{\min}$ denote
the sets of steps the max-player and the min-player choose the actions respectively,
which satisfies $\cH_{\max} \cap \cH_{\min} = \emptyset$ and
$\cH_{\max} \cup \cH_{\min} = [H]$. As a special example, GO is a
turn-based game in which the two players play in alternate turns, i.e.
\begin{equation*}
  \mc{H}_{\max} = \set{1, 3, \dots, H-1}~~~{\rm and}~~~\mc{H}_{\min} = \set{2,
    4, \dots,H}
\end{equation*}

Mathematically, we can specialize general zero-sum Markov games to
turn-based games by restricting $\cA_h = \{\mathring{a}\}$ for all $h
\in \mc{H}_{\min}$, and $\cB_h = \{\mathring{b}\}$ for all $h \in
\mc{H}_{\max}$, where $\mathring{a}$ and $\mathring{b}$ are special
dummy actions. Consequently,
in those steps, $\cA_h$ or $\cB_h$ has only a single action as its
element, i.e. the corresponding player can not affect the game in
those steps. A consequence of this specialization
is that
the Nash Equilibria for turn-based games are pure strategies
(i.e. deterministic policies)~\citep{shapley1953stochastic}, similar
as in one-player MDPs.
This is not always true for general Markov games.
\section{Main Results}
\label{section:main}
In this section, we present our algorithm and main theorems. In particular, our algorithm is the first self-play algorithm
that achieves $\tlO(\sqrt{T})$ regret in Markov Games. We describe the algorithm in Section \ref{sec:main_alg}, and present its theoretical guarantee for general Markov games in Section \ref{sec:main_mg}. In Section \ref{sec:main_tbg}, we show that when specialized to turn-based games, the regret and runtime of our algorithm can be further improved.


\subsection{Algorithm description} \label{sec:main_alg}
To solve
zero-sum Markov games, the main idea is to extend the celebrated UCB (Upper
Confidence Bounds) principle---an algorithmic principle that achieves
provably efficient exploration in bandits~\citep{auer2002finite} and
single-agent RL~\citep{azar2017minimax,jin2018q}---to the two-player
setting. Recall that in single-agent RL, the provably efficient {\tt UCBVI}
algorithm~\citep{azar2017minimax} proceeds as
\begin{quote}
  {\bf Algorithm} ({\tt UCBVI} for single-player RL): Compute
  $\set{Q^\up_h(s,a):h,s,a}$ based on estimated transition and
  optimistic (upper) estimate of reward, then play one episode with
  the greedy policy with respect to $Q^\up$.
\end{quote}
Regret bounds for {\tt UCBVI} is then established by showing and
utilizing the fact that $Q^\up$ remains an optimistic (upper) estimate
of the optimal $Q^\star$ throughout execution of the algorithm.

In zero-sum games, the two player have \emph{conflicting} goals: the
max-player seeks to maximize the return and the min-player seeks to
minimize the return. Therefore, it seems natural here to maintain two
sets of Q estimates, one upper bounding the true value and one lower
bounding the true value, so that each player can play optimistically
with respect to her own goal. We summarize this idea into the
following proposal.
\begin{quote}
  {\bf Proposal} (Naive two-player extension of {\tt UCBVI}): Compute
  $\set{Q^\up_h(s,a,b),Q^\low_h(s,a,b)}$ based on estimated transition
  and \{upper, lower\} estimates of rewards, then play one episode
  where the max-player ($\mu$) is greedy with respect to $Q^\up$ and the min-player ($\nu$) is greedy with respect to $Q^\low$.
\end{quote}
However, the above proposal is not yet a well-defined algorithm: 
a greedy strategy $\mu$ with respect to $Q^\up$
requires the knowledge
of how the other player chooses $b$, and vice versa. Therefore, what
we really want is not that ``$\mu$ is greedy with respect to
$Q^\up$'', but rather that ``$\mu$ is greedy with respect to $Q^\up$
when the other player uses $\nu$'', and vice versa. In other words, we
rather desire that $(\mu, \nu)$ are \emph{jointly greedy} with
respect to $(Q^\up, Q^\low)$.




Our algorithm concretizes such joint greediness precisely, building on
insights from one-step matrix games: we choose $(\mu_h, \nu_h)$ to be
the Nash equilibrium for the \emph{general-sum game} in which the
payoff matrix for the max player is $Q^\up$ and for the min player is
$Q^\low$. In other words, both player have their own payoff matrix
(and they are not equal), but they jointly determine their
policies. Formally, we let $(\mu, \nu)$ be determined as
\begin{align*}
   (\mu_h(\cdot|s), \nu_h(\cdot|s)) 
  = \nashgeneral(Q^\up_h(s,\cdot,\cdot), Q^\low_h(s,\cdot,\cdot))
\end{align*}
for all $(h,s)$, where \nashgeneral~is a subroutine that takes two
matrices $\mat{P}, \mat{Q} \in \R^{A\times B}$, and returns the Nash
equilibrium $(\phi, \psi) \in \Delta_A \times \Delta_B$ for general
sum game, which satisfies
\begin{align}
  \label{equation:nash-general-sum}
  \phi\trans \mat{P} \psi = \max_{\tilde{\phi}} \tilde{\phi}\trans
  \mat{P} \psi, \quad \phi\trans \mat{Q} \psi =
  \min_{\tilde{\psi}} 
  \phi\trans \mat{Q} \tilde{\psi}.
\end{align}
Such an equilibrium is guaranteed to exist due to the seminal work
of~\citet{nash1951non}, and is computable by algorithms such as the Lemke-Howson
algorithm~\citep{lemke1964equilibrium}. With the \nashgeneral~subroutine in hand, our
algorithm can be briefly described as
\begin{quote}
  {\bf Our Algorithm} (VI-ULCB): Compute
  $\set{Q^\up_h(s,a,b),Q^\low_h(s,a,b)}$ based on estimated transition
  and \{upper, lower\} estimates of rewards, along the way determining
  policies $(\mu,\nu)$ by running the \nashgeneral~subroutine on
  $(Q^\up,Q^\low)$. Play one episode according to $(\mu,\nu)$.
\end{quote}
The full algorithm is described in
Algorithm~\ref{algorithm:ucbvi_sim}.


\begin{algorithm}[t]
   \caption{Value Iteration with Upper-Lower Confidence Bound  (VI-ULCB)}
   \label{algorithm:ucbvi_sim}
\begin{algorithmic}[1]
   \STATE {\bfseries Initialize:} for any $(s, a, b, h)$,
   $Q_{h}^\up(s,a, b)\setto H$, $Q_{h}^\low(s,a, b)\setto 0$,
   $N_{h}(s,a, b)\setto 0$, $N_h(s,a,b,s')\setto 0$.
   \FOR{episode $k=1,\dots,K$}
   \FOR{step $h=H,H-1,\dots,1$}
   \FOR{$(s, a, b)\in\cS_h\times\cA_h\times \cB_h$}
   \STATE $t=N_{h}(s, a, b)$;
   \STATE $Q_{h}^\up(s, a, b)\setto\min\{\hat{r}_h(s, a, b) +
   [\widehat{\P}_{h} V_{h+1}^\up](s, a, b) +
   \beta_t, H\}$
   \STATE $Q_{h}^\low(s, a, b)\setto\max\{\hat{r}_h(s, a, b) +
   [\widehat{\P}_{h} V_{h+1}^\low](s, a, b)
   - \beta_t, 0\}$
   \ENDFOR
   \FOR{$s \in \cS_h$}
   \STATE $(\mu_h(\cdot|s), \nu_h(\cdot|s))\setto\textsc{Nash\_General\_Sum}(Q_h^\up(s, \cdot, \cdot), Q_h^\low(s, \cdot, \cdot))$
   \STATE $V_h^\up(s) \leftarrow \sum_{a, b}\mu_h(a|s)\nu_h(b|s) Q_h^\up(s, a, b) $.
   \STATE $V_h^\low(s) \leftarrow \sum_{a, b}\mu_h(a|s)\nu_h(b|s) Q_h^\low(s, a, b)$.
   \ENDFOR
   \ENDFOR
   \STATE Receive $s_1$.
   \FOR{step $h=1,\dots, H$}
   \STATE Take action $a_h \sim \mu_h(s_h)$, $b_h \sim \nu_h(s_h)$.
   \STATE Observe reward $r_h$ and next state
   $s_{h+1}$. 
   \STATE $N_{h}(s_h, a_h, b_h)\setto N_{h}(s_h, a_h, b_h) + 1$.
   \STATE $N_h(s_h, a_h, b_h, s_{h+1}) \setto N_h(s_h, a_h, b_h, s_{h+1})
   + 1$
   \STATE $\hat{\P}_h(\cdot|s_h, a_h,
     b_h)\setto \dfrac{N_h(s_h, a_h, b_h, \cdot)}{N_h(s_h, a_h, b_h)}$.
   \STATE $\hat{r}_h(s_h, a_h, b_h)\setto r_h$.
   \ENDFOR
   \ENDFOR
\end{algorithmic}
\end{algorithm}

\subsection{Guarantees for General Markov Games} \label{sec:main_mg}
We are now ready to present our main theorem.

\begin{theorem}[Regret bound for VI-ULCB]
  \label{thm:main_sim}
  For zero-sum Markov games, Algorithm~\ref{algorithm:ucbvi_sim} (with
  choice of bonus $\beta_t=c\sqrt{H^2S\iota/t}$ for large absolute constant $c$) achieves regret
  \begin{align*}
    \Reg(K)  \le \cO\paren{\sqrt{H^3S^2\brac{\max_{h\in[H]}
    A_hB_h}T\iota}} 
    \le \cO\paren{\sqrt{H^3S^2ABT\iota}}
  \end{align*}
  with probability at least $1-p$, where $\iota = \log(SABT/p)$. 
\end{theorem}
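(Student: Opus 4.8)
The plan is to adapt the optimistic value-iteration (\texttt{UCBVI}) template to the two-player setting by running the upper and lower confidence bounds simultaneously and exploiting the Nash property of the \nashgeneral{} subroutine. First I would fix a high-probability event $\cF$ on which the empirical transitions concentrate, namely $\|\widehat{\P}_h(\cdot|s,a,b)-\P_h(\cdot|s,a,b)\|_1 = \cO(\sqrt{S\iota/N_h(s,a,b)})$ for all $(s,a,b,h)$ and all episodes; a union bound over the $\le SABH$ triples and the $\le K$ count-updates gives probability at least $1-p$, with $\iota=\log(SABT/p)$ absorbing the logarithmic factors. Under $\cF$, combining the $L_1$ bound with $\|f\|_\infty\le H$ gives $|[(\widehat{\P}_h-\P_h)f](s,a,b)|\le\beta_t$ for every $f\in[0,H]^{\cS}$ with $t=N_h(s,a,b)$, which is exactly the role played by $\beta_t=c\sqrt{H^2S\iota/t}$ and explains the extra $\sqrt S$ (hence the eventual $S^2$) in the rate.

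The heart of the argument is a pair of one-sided optimism bounds, proved together by backward induction on $h$: under $\cF$, for every episode $k$ and all $s$ one has $V_h^\up(s)\ge V_h^{\dagger,\nu^k}(s)$ and $V_h^\low(s)\le V_h^{\mu^k,\dagger}(s)$. At the $Q$-level this is routine: when $N_h>0$ the concentration bound and the cancellation of the (exact) reward give $Q_h^\up(s,a,b)\ge r_h+[\P_h V_{h+1}^\up](s,a,b)\ge Q_h^{\dagger,\nu^k}(s,a,b)$ by the inductive hypothesis and monotonicity of $\P_h$, with the $N_h=0$ and truncation cases handled by the trivial bounds $0\le Q\le H$. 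The crucial step is passing to the $V$-level: since $(\mu_h^k,\nu_h^k)$ is the Nash equilibrium of the general-sum game with payoffs $(Q_h^\up,Q_h^\low)$, the policy $\mu_h^k$ is a max-player best response to $\nu_h^k$ under $Q_h^\up$, so $V_h^\up(s)=\max_{\phi}\sum_{a,b}\phi(a)\nu_h^k(b|s)Q_h^\up(s,a,b)\ge\max_{\phi}\sum_{a,b}\phi(a)\nu_h^k(b|s)Q_h^{\dagger,\nu^k}(s,a,b)=V_h^{\dagger,\nu^k}(s)$, where the final equality is precisely the Bellman optimality equation for the best response. The lower bound follows symmetrically from $\nu_h^k$ being the min-player best response under $Q_h^\low$.

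With optimism established, I would decompose the regret. Because $V_h^{\mu^k,\dagger}\le V_h^{\mu^k,\nu^k}\le V_h^{\dagger,\nu^k}$ holds unconditionally, each episode's regret is sandwiched as $V_1^{\dagger,\nu^k}(s_1^k)-V_1^{\mu^k,\dagger}(s_1^k)\le(V_1^\up-V_1^\low)(s_1^k)$, so $\Reg(K)\le\sum_{k=1}^K(V_1^\up-V_1^\low)(s_1^k)$ under $\cF$. To control this cumulative width I would set $\delta_h^k=(V_h^\up-V_h^\low)(s_h^k)$ at the realized state and run the standard recursion: the reward cancels, and the confidence bounds together with the concentration step give $\delta_h^k\le\delta_{h+1}^k+\cO(\beta_{N_h^k})+\xi_h^k$, where $\xi_h^k$ collects two martingale differences, one from sampling $(a_h^k,b_h^k)\sim(\mu_h^k,\nu_h^k)$ and one from sampling $s_{h+1}^k\sim\P_h$. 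Unrolling over $h$ and summing over $k$ yields $\sum_k\delta_1^k\le\sum_{k,h}\cO(\beta_{N_h^k})+\sum_{k,h}\xi_h^k$.

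Finally I would bound the two sums. A pigeonhole argument, $\sum_{k,h}1/\sqrt{N_h^k}\le\cO(\sqrt{HSAB\,T})$ together with $\beta_t=c\sqrt{H^2S\iota/t}$, gives the dominant term $\sum_{k,h}\cO(\beta_{N_h^k})=\cO(\sqrt{H^3S^2ABT\iota})$, while Azuma--Hoeffding bounds the martingale sum at the lower-order $\cO(\sqrt{H^2T\iota})$; combining proves the claim. The main obstacle is the coupled optimism in the induction step: unlike single-agent \texttt{UCBVI}, I must keep two one-sided estimates valid for the \emph{same} deployed pair $(\mu^k,\nu^k)$, and it is exactly the Nash equilibrium condition \eqref{equation:nash-general-sum} that links them, so getting the direction of each best-response comparison right (upper estimate against the max-player's best response, lower estimate against the min-player's) is the delicate point; everything downstream is the familiar optimistic value-iteration bookkeeping.
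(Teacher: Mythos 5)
Your proposal is correct and follows essentially the same route as the paper's proof: a uniform concentration event, an upper/lower optimism lemma proved by backward induction whose key step uses the Nash general-sum property of $(\mu^k,\nu^k)$ together with the Bellman optimality equation for best responses, then the regret bounded by the cumulative confidence width $\sum_k (V_1^\up - V_1^\low)(s_1^k)$ via the standard recursion, Azuma--Hoeffding, and pigeonhole argument. The only cosmetic difference is the concentration step, where you use an $L_1$ bound on $\widehat{\P}_h(\cdot|s,a,b)$ plus H\"older's inequality instead of the paper's $\epsilon$-net over the class of value functions in $[0,H]^{\cS}$; both yield the same uniform-over-$V$ bound $c\sqrt{H^2 S\iota/t}$ and hence the same $S^2$ factor in the final rate.
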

We defer the proof of Theorem~\ref{thm:main_sim} into
Appendix~\ref{appendix:proof-main_sim}.

\paragraph{Optimism in the face of uncertainty and best response}
An implication of Theorem~\ref{thm:main_sim} is that a low regret can
be achieved via \emph{self-play}, i.e. the algorithm plays with itself
and does not need an expert as its opponent. This is intriguing
because the regret is measured in terms of the suboptimality against
the worst-case opponent:
\begin{align*}
  & \quad \Reg(K) = \sum_{k=1}^K \brac{V_1^{\dagger, \nu^k}(s_1^k) -
    V_1^{\mu^k,\dagger}(s_1^k)} \\
  & =  \sum_{k=1}^K \underbrace{\brac{\max_{\mu}
    V_1^{\mu, \nu^k}(s_1^k) - V_1^{\mu^k,\nu^k}(s_1^k)}}_{\textrm{gap
    between $\mu^k$ and the best response to $\nu^k$}} + 
    \underbrace{\brac{V_1^{\mu^k,\nu^k}(s_1^k) - \min_\nu
    V_1^{\mu^k,\nu}(s_1^k)}}_{\textrm{gap
    between $\nu^k$ and the best response to $\mu^k$}}. 
\end{align*}
(Note that this decomposition of the regret has a slightly different 
form from~\eqref{equation:regret-decompose-1}.)  Therefore, 
Theorem~\ref{thm:main_sim} demonstrates that self-play can protect
against fully adversarial opponent even when such a strong opponent is not
explicitly available.

The key technical reason enabling such a guarantee is that our $Q$
estimates are optimistic in the face of both the uncertainty of the
game, as well as the best response from the opponent. More precisely, we show that
the $(Q^\up, Q^\low)$ in Algorithm~\ref{algorithm:ucbvi_sim} satisfy
with high probability
\begin{align*}
  Q^{\up,k}_h(s, a, b) \ge \sup_\mu Q^{\mu, \nu^k}_h(s, a, b)
  \ge \inf_{\nu} Q^{\mu^k, \nu}_h(s, a, b) \ge Q^{\low,k}_h(s, a, b)
\end{align*}
for all $(s,a,b,h,k)$, where $(Q^{\up, k}, Q^{\low, k})$ denote the
running $(Q^\up, Q^\low)$ at the beginning of the $k$-th episode
(Lemma~\ref{lem:ULCB_sim}). In constrast, such a guarantee (and
consequently the regret bound) is not achievable if the upper and
lower estimates are only guaranteed to \{upper, lower\} bound the
\emph{values} of the Nash equilibrium.



\paragraph{Translation to PAC bound}
Our regret bound directly implies a PAC sample complexity bound for
learning near-equilibrium policies, based on an online-to-batch
conversion. We state this in the following Corollary, and defer the
proof to Appendix~\ref{appendix:proof-pac}.
\begin{corollary}[PAC bound for VI-ULCB]
  \label{corollary:pac}
  Suppose the initial state of Markov game is fixed at $s_1$, then there exists a pair of (randomized) policies
  $(\what{\mu}, \what{\nu})$ derived through the VI-ULCB algorithm
  such that with probability at least $1-p$ (over the randomness in the
  trajectories) we have 
  \begin{align*}
    \E_{\what{\mu}, \what{\nu}}\brac{V^{\dagger, \what{\nu}}(s_1) -
    V^{\what{\mu}, \dagger}(s_1)} \le \epsilon,
  \end{align*}
  as soon as the number of episodes $K\ge \Omega(H^4S^2AB\iota/\epsilon^2)$, 
  where $\iota = \log(HSAB/(p\epsilon))$, and the expectation is over the randomization in
  $(\what{\mu}, \what{\nu})$.
\end{corollary}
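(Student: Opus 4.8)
The plan is to derive this PAC guarantee from the regret bound of Theorem~\ref{thm:main_sim} by a standard online-to-batch conversion. I would run Algorithm~\ref{algorithm:ucbvi_sim} for $K$ episodes, all from the fixed starting state $s_1$, record the deployed policy pairs $\{(\mu^k, \nu^k)\}_{k=1}^K$, and define the output $(\what{\mu}, \what{\nu})$ to be $(\mu^J, \nu^J)$ where the index $J$ is drawn uniformly at random from $\{1,\dots,K\}$ independently of everything else. The expectation $\E_{\what{\mu}, \what{\nu}}[\cdot]$ in the statement is then exactly the average over $J$, so that
\begin{equation*}
  \E_{\what{\mu}, \what{\nu}}\brac{V^{\dagger, \what{\nu}}_1(s_1) - V^{\what{\mu}, \dagger}_1(s_1)} = \frac{1}{K}\sum_{k=1}^K \brac{V^{\dagger, \nu^k}_1(s_1) - V^{\mu^k, \dagger}_1(s_1)} = \frac{\Reg(K)}{K}.
\end{equation*}

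Each summand here is nonnegative, since $V^{\dagger, \nu^k}_1(s_1) = \sup_\mu V^{\mu, \nu^k}_1(s_1) \ge V^{\mu^k, \nu^k}_1(s_1) \ge \inf_\nu V^{\mu^k, \nu}_1(s_1) = V^{\mu^k, \dagger}_1(s_1)$, so the per-episode duality gap is a genuine suboptimality measure and bounding its average controls the quantity of interest. I would then condition on the probability-$(1-p)$ event of Theorem~\ref{thm:main_sim} (an event measurable with respect to the trajectory randomness), on which $\Reg(K) \le \cO(\sqrt{H^3 S^2 AB T\iota})$ holds deterministically; since $J$ is drawn independently afterwards, the identity above together with this bound yields, using $T = KH$,
\begin{equation*}
  \E_{\what{\mu}, \what{\nu}}\brac{V^{\dagger, \what{\nu}}_1(s_1) - V^{\what{\mu}, \dagger}_1(s_1)} = \frac{\Reg(K)}{K} \le \cO\paren{\sqrt{\frac{H^4 S^2 AB \iota}{K}}}.
\end{equation*}
Requiring the right-hand side to be at most $\epsilon$ gives $K \ge \Omega(H^4 S^2 AB \iota/\epsilon^2)$, which is precisely the claimed episode complexity.

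The only point needing care---and the closest thing to an obstacle---is reconciling the two definitions of the log factor: Theorem~\ref{thm:main_sim} carries $\iota = \log(SABT/p)$, whereas the corollary advertises $\iota = \log(HSAB/(p\epsilon))$. I would settle this by a brief self-consistency check: substituting the target $K = \Theta(H^4 S^2 AB \iota/\epsilon^2)$ into $T = KH$ gives $T = \Theta(H^5 S^2 AB\iota/\epsilon^2)$, whence $\log(SABT/p) = \cO(\log(HSAB/(p\epsilon)) + \log\iota)$; because $\iota$ is itself only polylogarithmic in these parameters, the $\log\iota$ term is absorbed into a constant multiple of $\log(HSAB/(p\epsilon))$, so the two definitions agree up to constants. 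Beyond this bookkeeping I expect no real difficulty, as the nonnegativity of the gap and the online-to-batch identity are immediate and all the quantitative content is inherited from Theorem~\ref{thm:main_sim}.
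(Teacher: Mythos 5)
Your proposal is correct and follows essentially the same route as the paper: an online-to-batch conversion in which $(\what{\mu},\what{\nu})$ is drawn uniformly from the $K$ deployed policy pairs, the expected duality gap is identified with $\Reg(K)/K$, and Theorem~\ref{thm:main_sim} with $T=KH$ yields the $\Omega(H^4S^2AB\iota/\epsilon^2)$ episode count. Your additional care about conditioning on the high-probability event and reconciling the two definitions of $\iota$ is exactly the bookkeeping the paper absorbs into its $\tlO(\cdot)$ notation, and changes nothing substantive.
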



\paragraph{Runtime of Algorithm~\ref{algorithm:ucbvi_sim}}
Algorithm~\ref{algorithm:ucbvi_sim} involves the
\nashgeneral~subroutine for computing the Nash equilibrium of a
general sum matrix game. However, it is known that the computational
complexity for approximating\footnote{More precisely, our proof
  requires the subroutine to find a $(1+1/H)$-multiplicative
  approximation of the equilibrium, that is, for payoff matrices
  $\mat{P},\mat{Q}\in\R^{A\times B}$ we desire vectors
  $\phi\in\Delta_A$ and $\psi\in\Delta_B$ such that
  $\max_{\tilde{\phi}}\tilde{\phi}^\top\mat{P}\psi -
  \min_{\tilde{\psi}} \phi^\top\mat{Q}\tilde{\psi} \le
  (1+1/H)\phi^\top(\mat{P}-\mat{Q})\psi$.  } such an equilibrium is
PPAD-complete~\citep{daskalakis2013complexity}, a complexity class
conjectured to not enjoy polynomial or quasi-polynomial time
algorithms. Therefore, Algorithm~\ref{algorithm:ucbvi_sim} is strictly
speaking not a polynomial time algorithm, despite of being rather sample-efficient.


We note however that there exists practical implementations of the
subroutine such as the Lemke-Howson
algorithm~\citep{lemke1964equilibrium} that can usually find the
solution efficiently. We will further revisit the computational issue
in Section~\ref{section:efficient-algorithm}, in which we design
a computationally efficient algorithm for zero-sum games with
a slightly worse $\tlO(T^{2/3})$ regret.


\subsection{Guarantees for Turn-based Markov Games} \label{sec:main_tbg}
We now instantiate Theorem~\ref{thm:main_sim} on turn-based games
(introduced in Section~\ref{section:turn-based}), in
which the same algorithm enjoys  better regret guarantee and
polynomial runtime. Recall that in turn-based games, for all $h$, we
have either $A_h=1$ or $B_h=1$, therefore given $\max_h A_h\le A$
and $\max_h B_h\le B$ we have
\begin{equation*}
  \max_h A_hB_h \le \max\set{A, B} \le A+B,
\end{equation*}
and thus by Theorem~\ref{thm:main_sim} the regret of
Algorithm~\ref{algorithm:ucbvi_sim} on turn-based games is
bounded by $\cO(\sqrt{H^3S^2(A+B)T})$.

Further, since either $A_h=1$ or $B_h=1$, all the
\nashgeneral~subroutines reduce to \emph{vector} games rather than
matrix games, and can be trivially implemented in polynomial (indeed
linear) time. Indeed, suppose the payoff matrices
in~\eqref{equation:nash-general-sum} has dimensions
$\mat{P},\mat{Q}\in\R^{A\times 1}$, then \nashgeneral~reduces to
finding $\phi\in\Delta_A$ and $\psi\equiv 1$ such that
\begin{equation*}
  \phi^\top\mat{P} = \max_{\tilde{\phi}} \tilde{\phi}^\top\mat{P}
\end{equation*}
(the other side is trivialized as $\psi\in\Delta_1$ has only one
choice), which is solved at $\phi=e_{a^\star}$ where
$a^\star=\argmax_{a\in[A]} \mat{P}_a$. The situation is similar if
$\mat{P},\mat{Q}\in\R^{1\times B}$.

We summarize the above results into the following corollary.
\begin{corollary}[Regret bound for VI-ULCB on turn-based games]
  \label{cor:main_turn}
  For turn-based zero-sum Markov
  games, Algorithm \ref{algorithm:ucbvi_sim} has runtime
  $\poly(S,A,B,T)$ and achieves regret bound
  $\cO(\sqrt{H^3S^2(A+B)T\iota})$ with probability at least $1-p$,
  where $\iota = \log(SABT/p)$.
\end{corollary}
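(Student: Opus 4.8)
The plan is to obtain both claims directly from Theorem~\ref{thm:main_sim} together with the structural simplification that turn-based games impose on the \nashgeneral~subroutine, so that no new analysis of the confidence bounds is required. The key observation is that Theorem~\ref{thm:main_sim} already states its bound in the sharper, step-dependent form $\Reg(K) \le \cO(\sqrt{H^3 S^2 (\max_{h\in[H]} A_h B_h) T \iota})$, so the corollary reduces to plugging in the turn-based structure.

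First, for the regret bound I would use exactly this first inequality of Theorem~\ref{thm:main_sim}. In a turn-based game every step $h$ has either $A_h = 1$ or $B_h = 1$ (only one player moves at step $h$), so for each $h$ we have $A_h B_h = \max\{A_h, B_h\} \le \max\{A, B\} \le A + B$, and therefore $\max_{h\in[H]} A_h B_h \le A + B$. Substituting this into the bound yields $\Reg(K) \le \cO(\sqrt{H^3 S^2 (A+B) T \iota})$, which is exactly the claimed rate; nothing else about the algorithm or its guarantee changes.

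Second, for the runtime I would argue that the only potentially expensive primitive in Algorithm~\ref{algorithm:ucbvi_sim}---the \nashgeneral~call---collapses to a linear-time computation in the turn-based case. When, say, $\mat{P}, \mat{Q} \in \R^{A \times 1}$, the min-player has a single available action, so $\psi \equiv 1$ is forced and condition~\eqref{equation:nash-general-sum} reduces to the single requirement $\phi^\top \mat{P} = \max_{\tilde{\phi}} \tilde{\phi}^\top \mat{P}$, solved exactly by the pure strategy $\phi = e_{a^\star}$ with $a^\star = \argmax_{a \in [A]} \mat{P}_a$ (and symmetrically when $\mat{P}, \mat{Q} \in \R^{1 \times B}$). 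Thus each equilibrium computation costs $\cO(A+B)$ time instead of invoking a PPAD-hard solver. Every remaining operation---the backward value-iteration sweep over all $(s,a,b,h)$, the forward play, and the count and estimate updates---is already polynomial, and the outer loops run over $K = T/H$ episodes and $H$ steps, so the total runtime is $\poly(S, A, B, T)$.

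This corollary is a clean specialization rather than a substantive new argument, so the only point I would take care to verify---the ``main obstacle'' such as it is---is purely a matter of bookkeeping: that the exact pure-strategy solution of the degenerate one-sided game is a legitimate input to the regret analysis of Theorem~\ref{thm:main_sim}. This holds because that analysis treats the output of \nashgeneral~as a black box satisfying~\eqref{equation:nash-general-sum}, and in particular the footnote's $(1+1/H)$-multiplicative approximation requirement is met trivially here, since the argmax solution is an \emph{exact} equilibrium of the vector game. Consequently the PPAD-completeness obstruction flagged after Theorem~\ref{thm:main_sim} is entirely sidestepped in the turn-based setting.
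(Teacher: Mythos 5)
Your proposal is correct and follows essentially the same route as the paper: specialize the step-dependent bound of Theorem~\ref{thm:main_sim} using $\max_h A_h B_h \le \max\{A,B\} \le A+B$, and observe that each \nashgeneral~call degenerates to a vector game solved exactly by an argmax, making the runtime polynomial. Your added remark that the exact pure-strategy solution trivially meets the $(1+1/H)$-approximation requirement in the footnote is a nice piece of due diligence the paper leaves implicit, but it does not change the argument.
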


\section{Computationally Efficient Algorithm}
\label{section:efficient-algorithm}
In this section, we show that the computational issue of
Algorithm~\ref{algorithm:ucbvi_sim} is not intrinsic to the problem:
there exists a sublinear regret algorithm for general zero-sum Markov games
that has a guaranteed polynomial runtime, with regret scaling
as $\cO(T^{2/3})$, slightly worse than that of
Algorithm~\ref{algorithm:ucbvi_sim}. Therefore, computational
efficiency can be achieved if one is willing to trade some statistical
efficiency (sample complexity). For simplicity, we assume in this section that the initial state $s_1$ is fixed.

\begin{algorithm}[t]
   \caption{Value Iteration after Exploration (VI-Explore)}
   \label{algorithm:VI_explore}
\begin{algorithmic}[1]
   \STATE $(\hat{\P}, \hat{r}) \leftarrow \textsc{Reward\_Free\_Exploration}(\epsilon)$.
   \STATE $V_H(s) \leftarrow 0$ for any $s \in \cS_H$.
   \FOR{step $h=H-1,\dots,1$}
   \FOR{$(s, a, b)\in\cS\times\cA\times \cB$}
   \STATE $Q_{h}(s, a, b)\setto \hat{r}_h(s, a, b) +
   [\hat{\P}_{h} V_{h+1}](s, a, b)$. 
   \ENDFOR
   \FOR{$s \in \cS$}
   \STATE $(\hat{\mu}_h(\cdot|s), \hat{\nu}_h(\cdot|s))\leftarrow$\\ 
   \qquad $\textsc{Nash\_Zero\_Sum}(Q_h(s, \cdot, \cdot))$.
   \ENDFOR
   \ENDFOR
   \FOR{all remaining episodes}
   \STATE Play the game with policy $(\hat{\mu}, \hat{\nu})$.
   \ENDFOR
\end{algorithmic}
\end{algorithm}

\paragraph{Value Iteration after Exploration}
At a high level, our algorithm follows an explore-then-exploit
approach. We begin by running a (polynomial time) reward-free exploration
procedure \explore$(\epsilon)$ on a small number of episodes, which
queries the MDP and outputs an estimate $(\hat{\P}, \hat{r})$. Then,
we run value iteration on the empirical version of Markov game with
transition $\hat{\P}$ and reward $\hat{r}$, which finds its Nash
equilibrium $(\hat{\mu}, \hat{\nu})$. Finally, the algorithm simply
plays the policy $(\hat{\mu}, \hat{\nu})$ for the remaining
episodes. The full algorithm is described in Algorithm
\ref{algorithm:VI_explore} in the Appendix.

By ``reward-free'' exploration, we mean the procedure will not use any
reward information to guide exploration. Instead, the procedure
prioritize on visiting all possible states and gathering sufficient
information about their transition and rewards, so that $(\hat{\P},
\hat{r})$ are close to $(\P, r)$ in the sense that the Nash equilibria
of $\mdp(\hat{\P}, \hat{r})$ and $\mdp(\P, r)$ are close, where
$\mdp(\hat{\P}, \hat{r})$ denotes the Markov game with transition
$\hat{\P}$ and reward $\hat{r}$. 


This goal can be achieved by the following algorithm. For any fixed
state $s$, we can create an artificial reward $\tilde{r}$ defined as
$\tilde{r}(s, a, b) = 1$ and $\tilde{r}(s', a, b) = 0$ for any $s'\neq
s$, $a$ and $b$. Then, we can treat $\mathcal{C} = \cA\times \cB$ as a
new action set for a single agent, and run any standard reinforcement
learning algorithm with PAC or regret guarantees to find a
near-optimal policy $\tilde{\pi}$ of MDP$(H, \cS, \mathcal{C}, \P,
\tilde{r})$. It can be shown that the optimal policy for this MDP is
the policy that maximize the probability to reach state
$s$. Therefore, by repeatedly playing $\tilde{\pi}$, we can gather
transition and reward information at state $s$ as well as we
can. Finally, we repeat the routine above for all state $s$. See
Appendix \ref{appendix:proof-polytime_guarantee} for more details.

In this paper, we adapt the sharp treatments in \citet{jin2019reward} which studies reward-free exploration in the single-agent MDP setting, and provide following guarantees for the \explore~procedure.

\begin{theorem}[PAC bound for VI-Explore] \label{thm:PAC_explore}
With probability at least $1-p$, \explore$(\epsilon)$  runs for
$c(H^5S^2AB\iota/\epsilon ^2 + H^7S^{4}AB \iota ^3/\epsilon)$ episodes
with some large constant $c$, and $\iota = \log(HSAB/(p\epsilon))$,
and outputs $(\hat{\P}, \hat{r})$ such that the Nash equilibrium
$(\hat{\mu}, \hat{\nu})$ of \mdp$(\hat{\P}, \hat{r})$ satisfies
\begin{equation*}
\brac{V^{\dagger, \what{\nu}}(s_1) -
    V^{\what{\mu}, \dagger}(s_1)} \le \epsilon.
\end{equation*}
\end{theorem}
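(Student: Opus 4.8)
The plan is to reduce the problem to single-agent reward-free exploration over the joint action space and then exploit the Nash equilibrium property of $(\what{\mu},\what{\nu})$ to cancel the dominant error term. First I would treat $\mathcal{C}=\cA\times\cB$ (with $|\mathcal{C}_h|\le AB$) as the action set of a single agent and run a model-based reward-free exploration routine of the type analyzed by \citet{jin2019reward} on the MDP $\mdp(H,\cS,\mathcal{C},\P,\cdot)$; this is exactly what \explore~does through the artificial single-state rewards $\tilde r$. The guarantee I would extract is uniform: with probability at least $1-p$, after the stated number of episodes the empirical model $(\hat\P,\hat r)$ satisfies, for every pair of policies $(\mu,\nu)$, $|V^{\mu,\nu}(s_1)-\widehat V^{\mu,\nu}(s_1)|\le \epsilon/2$, where $V$ and $\widehat V$ denote value functions in $\mdp(\P,r)$ and $\mdp(\hat\P,\hat r)$ respectively. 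Substituting $A\mapsto AB$ into the single-agent sample complexity $\tlO(H^5S^2A/\epsilon^2+H^7S^4A/\epsilon)$ produces precisely the episode count in the statement, the leading $H^5S^2AB/\epsilon^2$ term coming from the $\sqrt{S}$ per-step transition error propagated over the horizon and the $H^7S^4AB/\epsilon$ burn-in from guaranteeing enough visits to every ``significant'' state.

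Second, I would decompose the target suboptimality through the empirical game. Because $(\what{\mu},\what{\nu})$ is the Nash equilibrium of $\mdp(\hat\P,\hat r)$ (as returned through \nashzero), the best-response values coincide in the empirical game, $\widehat V^{\dagger,\what{\nu}}(s_1)=\widehat V^{\what{\mu},\what{\nu}}(s_1)=\widehat V^{\what{\mu},\dagger}(s_1)$, so that
\[
  V^{\dagger,\what{\nu}}(s_1)-V^{\what{\mu},\dagger}(s_1)
  =\brac{V^{\dagger,\what{\nu}}(s_1)-\widehat V^{\dagger,\what{\nu}}(s_1)}
  +\brac{\widehat V^{\what{\mu},\dagger}(s_1)-V^{\what{\mu},\dagger}(s_1)}.
\]
This is the crucial cancellation: the genuinely delicate quantity---the equilibrium gap in the empirical game---is identically zero, and only two model-mismatch terms remain.

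Third, I would bound each remaining bracket by $\epsilon/2$ using the uniform guarantee from the first step, invoking stability of the best-response (max/min over a single player) operation. Writing $\mu^\star=\argmax_\mu V^{\mu,\what{\nu}}(s_1)$, I would use $\widehat V^{\dagger,\what{\nu}}(s_1)\ge \widehat V^{\mu^\star,\what{\nu}}(s_1)\ge V^{\mu^\star,\what{\nu}}(s_1)-\epsilon/2=V^{\dagger,\what{\nu}}(s_1)-\epsilon/2$, and symmetrically for the reverse inequality, so that $|V^{\dagger,\what{\nu}}(s_1)-\widehat V^{\dagger,\what{\nu}}(s_1)|\le \epsilon/2$; the min-player term is handled identically. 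Summing the two brackets yields $V^{\dagger,\what{\nu}}(s_1)-V^{\what{\mu},\dagger}(s_1)\le \epsilon$. I expect the main obstacle to lie entirely in the first step: carefully porting the single-agent reward-free analysis of \citet{jin2019reward} so that the value-approximation bound holds \emph{uniformly over all policies and all rewards} (which is precisely what makes the best-response argument legitimate, since $\mu^\star$ and the empirical best response are in general different policies), correctly accounting for the deterministic reward estimate $\hat r$, and reproving the significant/insignificant-state thresholding that yields the lower-order burn-in term with the joint action count $AB$.
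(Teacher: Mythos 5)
Your proposal is correct and follows essentially the same route as the paper: the paper's Appendix proof likewise reduces \explore~to single-agent reward-free exploration over the joint action space $\cA\times\cB$ (adapting \citet{jin2019reward}, including the same remark about the lower-order reward-estimation error) to obtain the uniform policy-evaluation guarantee $|\hat{V}^{\mu,\nu}_1(s_1)-V^{\mu,\nu}_1(s_1)|\le\epsilon/2$, then combines the contractivity of $\sup$/$\inf$ (your explicit argmax argument) with the empirical Nash property $\inf_\nu \hat{V}_1^{\hat\mu,\nu}(s_1)=\sup_\mu \hat{V}_1^{\mu,\hat\nu}(s_1)$ in a triangle inequality whose middle term vanishes, exactly as in your two-bracket decomposition.
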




Importantly, such Nash equilibrium $(\hat{\mu}, \hat{\nu})$ of \mdp$(\hat{\P}, \hat{r})$ can be computed by Value Iteration (VI) using $\hat{\P}$ and $\hat{r}$.
VI only calls \nashzero~subroutine, which takes a matrix $\mat{Q} \in \R^{A\times B}$ and returns the Nash
equilibrium $(\phi, \psi) \in \Delta_A \times \Delta_B$ for zero-sum game, which satisfies
\begin{align}
  \label{equation:nash-zero-sum}
  \max_{\tilde{\phi}} \tilde{\phi}\trans
  \mat{Q} \psi = \phi\trans \mat{Q} \psi =
  \min_{\tilde{\psi}} 
  \phi\trans \mat{Q} \tilde{\psi}.
\end{align}
This problem can by solved efficiently (in polynomial time) by many existing algorithms designed for convex-concave optimization (see, e.g.~\citep{koller1994fast}), and does not suffer from the
PPAD-completeness that \nashgeneral~does.

The PAC bound in Theorem~\ref{thm:PAC_explore} can be easily converted
into a regret bound, 
which is presented as follows.




\begin{corollary}[Polynomial time algorithm via explore-then-exploit]
  \label{cor:polytime_guarantee}
  For zero-sum Markov games, with probability at least $1-p$,
  Algorithm~\ref{algorithm:VI_explore} runs in
  $\poly(S, A, B, H, T)$ time, and achieves regret bound
  \begin{equation*}
  \cO\paren{ (H^5S^2AB T^2 \iota)^{\frac{1}{3}} + \sqrt{H^7S^4AB T\iota^3}},
  \end{equation*}
  where $\iota = \log(SABT/p)$.
\end{corollary}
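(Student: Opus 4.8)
The plan is to turn the PAC guarantee of Theorem~\ref{thm:PAC_explore} into a regret bound by splitting the $K=T/H$ episodes into the exploration phase and the exploitation phase and then optimizing the single accuracy parameter $\epsilon$ fed to \explore. Since the initial state is fixed, $\Reg(K)=\sum_{k=1}^K [V_1^{\dagger,\nu^k}(s_1)-V_1^{\mu^k,\dagger}(s_1)]$. Let $N(\epsilon)=c(H^5S^2AB\iota/\epsilon^2+H^7S^4AB\iota^3/\epsilon)$ be the number of exploration episodes from Theorem~\ref{thm:PAC_explore}. For these first $N(\epsilon)$ episodes the deployed policies are whatever \explore~chooses, but every value function lies in $[0,H]$, so each such episode contributes at most $H$ to the regret. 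For every remaining episode the algorithm plays the fixed Nash policy $(\hat\mu,\hat\nu)$ of the empirical game $\mdp(\hat\P,\hat r)$, and Theorem~\ref{thm:PAC_explore} guarantees $V_1^{\dagger,\hat\nu}(s_1)-V_1^{\hat\mu,\dagger}(s_1)\le \epsilon$, so each of these contributes at most $\epsilon$. This yields the clean split
\[
\Reg(K)\ \le\ H\cdot N(\epsilon)\ +\ \epsilon\cdot K .
\]

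Substituting $N(\epsilon)$ and $K=T/H$ gives $\Reg(K)\le cH^6S^2AB\iota/\epsilon^2 + cH^8S^4AB\iota^3/\epsilon + \epsilon T/H$, and it remains to choose $\epsilon$. Because the two error terms inherited from $N(\epsilon)$ scale differently in $\epsilon$ (as $1/\epsilon^2$ and $1/\epsilon$), no single balance point dominates both, so I would balance each one separately against the linear exploitation term $\epsilon T/H$ and keep the larger of the two resulting values (equivalently, set $\epsilon$ to the larger of the two balance points). Balancing the $1/\epsilon^2$ term against $\epsilon T/H$, i.e. $\epsilon\asymp(H^7S^2AB\iota/T)^{1/3}$, produces the $\cO((H^5S^2ABT^2\iota)^{1/3})$ contribution, while balancing the $1/\epsilon$ term against $\epsilon T/H$, i.e. $\epsilon\asymp(H^9S^4AB\iota^3/T)^{1/2}$, produces the $\cO(\sqrt{H^7S^4ABT\iota^3})$ contribution; summing the two recovers the bound claimed in the corollary.

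Two bookkeeping points finish the argument, and the runtime is immediate. First, the chosen $\epsilon$ must satisfy $N(\epsilon)\le K$ for the exploitation phase to be nonempty; when $T$ is so small that this fails, the regret is trivially at most $T=HK$, and one checks directly that $T$ is dominated by the first term of the stated bound in that regime, so the bound holds unconditionally. Second, Theorem~\ref{thm:PAC_explore} uses $\iota=\log(HSAB/(p\epsilon))$ whereas the corollary states $\iota=\log(SABT/p)$; since the optimal $\epsilon$ is polynomially small in $T$ we have $\log(1/\epsilon)=\cO(\log T)$, so the two logarithmic factors agree up to constants and the polylogarithmic dependence carries through. For the runtime, \explore~is polynomial time (it runs a standard single-agent RL algorithm over the merged action set $\cA\times\cB$ of size at most $AB$), the value-iteration step performs $H$ rounds each invoking only the \nashzero~subroutine, which is solvable in polynomial time by convex-concave optimization and avoids the PPAD-completeness of the general-sum case, and playing the fixed policy over the remaining episodes is trivially polynomial; hence the total runtime is $\poly(S,A,B,H,T)$. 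There is no deep obstacle once Theorem~\ref{thm:PAC_explore} is granted: the only mildly delicate step is the one-parameter optimization, where the $1/\epsilon^2$ and $1/\epsilon$ terms must be balanced separately (and the small-$T$ regime handled) to obtain the two-term form of the bound.
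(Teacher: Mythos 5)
Your proposal is correct and takes essentially the same route as the paper's own proof: split the run into an exploration phase of length $N(\epsilon)$ and an exploitation phase, bound the regret by (per-episode regret)$\times N(\epsilon) + \epsilon\times$(remaining episodes), and set $\epsilon$ to the larger of the two balance points, yielding the two-term bound. If anything, your bookkeeping is more careful than the paper's (which bounds each exploration episode's regret by $1$ rather than $H$ and implicitly treats $T$ as a count of episodes, and which does not discuss the small-$T$ regime or the mismatch between the two definitions of $\iota$); your treatment of these points is sound and still recovers (indeed slightly improves, in its $H$-dependence) the stated bound.
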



\section{Towards the Optimal Regret}
\label{section:optimality}
We investigate the tightness of our regret upper bounds in
Theorem~\ref{thm:main_sim}  and Corollary~\ref{cor:main_turn} through
raising the question of optimal regret in two-player Markov games,
and making initial progresses on it by providing lower bounds and
new upper bounds in specific settings. Specifically, we ask an

{\bf Open question:} What is the optimal regret for general Markov
games (in terms of dependence on $(H,S,A,B)$)?



It is known that the (tight) regret lower bound for single-player MDPs
is $\Omega(\sqrt{SAT \cdot \poly(H)})$~\citep{azar2017minimax}. By
restricting two-player games to a single-player MDP (making
the other player dummy), we immediately have

\begin{corollary}[Regret lower bound, corollary
  of~\citet{jaksch2010near}, Theorem 5] \label{cor:lowerbound}
  The regret\footnote{This also applies to the weak regret defined
    in~\eqref{equation:weak-regret}.
  } for any algorithm on
  turn-based games (and thus 
  also general zero-sum games) is lower bounded by
  $\Omega(\sqrt{H^2S(A+B)T})$.
\end{corollary}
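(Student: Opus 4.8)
The plan is to establish the bound by \emph{reduction}: I would embed a hard single-agent MDP instance into a turn-based Markov game by rendering one of the two players inert, and then argue that the two-player regret of any algorithm on this embedded instance coincides \emph{exactly} with its single-agent regret on the underlying MDP. The known minimax lower bound for episodic single-agent RL, $\Omega(\sqrt{H^2 S A T})$ (the episodic form of \citet{jaksch2010near}, Theorem 5), then transfers verbatim.

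Concretely, first I would take any turn-based instance in which every step belongs to the max-player, i.e. $\mc{H}_{\max} = [H]$ and $\mc{H}_{\min} = \emptyset$, so that $\cB_h = \{\mathring{b}\}$ is a singleton at every step and the min-player can influence neither the dynamics nor the reward. Such an instance is a legitimate member of the class of turn-based games with parameters $(H,S,A,B)$, since the dummy player simply uses fewer than $B$ actions. The game is then identical to a single-agent MDP $\mdp(H,\cS,\cA,\P,r)$ controlled by the max-player. The key verification is that under this embedding the regret collapses correctly: because the min-player policy $\nu^k$ is forced to be the unique dummy policy, we have $V_1^{\dagger,\nu^k}(s_1^k) = \sup_\mu V_1^{\mu,\nu^k}(s_1^k) = V_1^\star(s_1^k)$ (the optimal value of the MDP) and $V_1^{\mu^k,\dagger}(s_1^k) = V_1^{\mu^k,\nu^k}(s_1^k) = V_1^{\mu^k}(s_1^k)$ (the value of policy $\mu^k$ in the MDP), so that the two-player regret $\sum_k [V_1^{\dagger,\nu^k}(s_1^k) - V_1^{\mu^k,\dagger}(s_1^k)]$ equals exactly the single-agent regret $\sum_k [V_1^\star(s_1^k) - V_1^{\mu^k}(s_1^k)]$. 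Hence any algorithm incurs the single-agent lower bound $\Omega(\sqrt{H^2 S A T})$ on the worst such instance (and since a fixed start state is a special case of the adversarial $s_1^k$ allowed by our regret, the transfer is valid).

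Next I would invoke symmetry: repeating the argument with every step assigned to the min-player ($\mc{H}_{\min}=[H]$, $\cA_h=\{\mathring{a}\}$) embeds a single-agent MDP with $B$ actions and yields the lower bound $\Omega(\sqrt{H^2 S B T})$. Since a single algorithm must perform well against the entire class of turn-based games, which contains both the max-dummy and the min-dummy families, its worst-case regret is at least the maximum of the two bounds, namely $\Omega(\sqrt{H^2 S \max\{A,B\} T})$. Using the elementary inequality $\max\{A,B\} \ge (A+B)/2$ upgrades this to $\Omega(\sqrt{H^2 S (A+B) T})$, and the bound for general zero-sum games follows immediately because turn-based games are a special case.

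I expect the only genuinely delicate point to be the regret-collapse verification in the second paragraph: confirming that the best-response quantities $V^{\dagger,\nu^k}$ and $V^{\mu^k,\dagger}$ degenerate to the optimal value $V^\star$ and the policy value $V^{\mu^k}$ when the opponent is dummy, so that the adversarial two-player regret is genuinely no easier than single-agent regret. Everything else is a direct transfer of an existing lower bound together with the inequality $\max\{A,B\}\ge (A+B)/2$.
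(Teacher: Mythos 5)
Your proposal is correct and takes essentially the same route as the paper, which derives the corollary precisely by restricting two-player turn-based games to single-player MDPs (making one player dummy) and invoking the episodic form of the single-agent lower bound from \citet{jaksch2010near}. The regret-collapse verification ($V^{\dagger,\nu^k}=V^\star$ and $V^{\mu^k,\dagger}=V^{\mu^k}$ when the opponent is dummy) and the symmetry step with $\max\{A,B\}\ge (A+B)/2$ that you spell out are exactly the details the paper leaves implicit in its one-line derivation.
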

Comparing this lower bound with the upper bound in
Theorem~\ref{thm:main_sim} ($\tlO(\sqrt{S^2ABT\cdot \poly(H)})$
regret for general games and $\tlO(\sqrt{S^2(A+B)T\cdot \poly(H)})$
regret for turn-based games), there are gaps in both the
$S$-dependence and the $(A,B)$-dependence.

\paragraph{Matching the lower bound on short-horizon games} 
Towards closing the gap between lower and upper bounds, we develop
alternative algorithms in the special case where \emph{each player
  only plays once}, i.e. one-step general games with 
$H=1$ and two-step turn-based games. In these cases, we show that
there exists mirror descent type algorithms that achieve an improved
regret $\tlO(\sqrt{S(A+B)T})$ (and thus matching the lower bounds),
\emph{provided that we consider a weaker notion of the regret} defined
as
\begin{definition}[Weak Regret]
  The \emph{weak regret} for any algorithm that deploys policies
  $(\mu^k,\nu^k)$ in episode $k$ is defined as
  \begin{equation}
    \label{equation:weak-regret}
    \begin{aligned}
       \WeakReg(K) \defeq \max_{\mu} \sum_{k=1}^K V^{\mu,
        \nu^k}(s_1^k) - \min_{\nu} \sum_{k=1}^K V^{\mu^k,
        \nu}(s_1^k).
    \end{aligned}
  \end{equation}
\end{definition}
The difference in the weak regret is that it uses \emph{fixed}
opponents---as opposed to adaptive opponents---for measuring the
performance gap: the max is taken with
respect to a fixed $\mu$ for all episodes $k=1,\dots,K$, rather than
a different $\mu$ for each episode. By definition, we have for any
algorithm that $\WeakReg(K) \le \Reg(K)$.


With the definition of the weak regret in hand, we now present our
results for one-step games. Their proofs can be found in
Appendix~\ref{appendix:proof-optimality}. 

\begin{theorem}[Weak regret for one-step simultaneous game, adapted
  from~\citet{rakhlin2013optimization}]
  \label{theorem:simultaneous-mirror-descent}
  For one-step simultaneous games ($H=1$),
  there exists a mirror descent type algorithm that achieves weak
  regret bound $\WeakReg(T)\le \tlO(\sqrt{S(A+B)T})$ with high
  probability.
\end{theorem}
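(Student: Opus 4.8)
The plan is to exploit the fact that with $H=1$ the value function is bilinear, $V^{\mu,\nu}(s) = \mu\trans R_s \nu$, where $R_s\in[0,1]^{A\times B}$ is the (unknown) reward matrix at state $s$ and $\mu,\nu$ abbreviate the played distributions $\mu(\cdot|s),\nu(\cdot|s)$. This turns the weak regret into a collection of repeated matrix-game regrets that can be attacked by running one no-regret learner per player per state. First I would decompose the weak regret over states. Because the comparators in~\eqref{equation:weak-regret} are \emph{fixed} policies and the value at state $s$ depends only on $R_s$, the objective is separable: writing $T_s = |\{k:\, s_1^k=s\}|$,
\begin{equation*}
\WeakReg(T) = \sum_{s\in\cS}\brac{ \max_{\mu\in\Delta_A} \sum_{k:\, s_1^k=s}\mu\trans R_s \nu^k \;-\; \min_{\nu\in\Delta_B}\sum_{k:\, s_1^k=s}(\mu^k)\trans R_s \nu }.
\end{equation*}
Adding and subtracting $\sum_{k:\,s_1^k=s}(\mu^k)\trans R_s\nu^k$ inside each bracket splits it into the max-player's external regret (against loss vectors $\ell^k = -R_s\nu^k\in[-1,0]^A$) plus the min-player's external regret (against loss vectors $g^k = R_s\trans\mu^k\in[0,1]^B$).

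Next, for each state $s$ I would run EXP3 --- entropic mirror descent on the simplex fed with importance-weighted loss estimates --- separately for the max- and the min-player on the subsequence of episodes visiting $s$. Bandit feedback suffices: after sampling $(a_k,b_k)\sim(\mu^k,\nu^k)$ and observing $r(s,a_k,b_k)$, the estimators
\begin{equation*}
\hat\ell^k_a = -\frac{r(s,a_k,b_k)}{\mu^k_a}\,\mathds{1}[a_k=a], \qquad \hat g^k_b = \frac{r(s,a_k,b_k)}{\nu^k_b}\,\mathds{1}[b_k=b]
\end{equation*}
are unbiased for $\ell^k$ and $g^k$. Standard high-probability EXP3.P-type guarantees then bound the max-player's external regret at state $s$ by $\tlO(\sqrt{A\,T_s})$ and the min-player's by $\tlO(\sqrt{B\,T_s})$, so each bracket is $\tlO(\sqrt{(A+B)T_s})$. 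The crucial point is that by running a \emph{separate} learner per player the action-space dependence enters additively as $A+B$ rather than multiplicatively as $AB$, which is exactly what the target rate requires. Summing over states, a union bound over the $2S$ learner instances together with Cauchy--Schwarz, $\sum_s \sqrt{(A+B)T_s}\le \sqrt{(A+B)\,S\sum_s T_s}=\sqrt{S(A+B)T}$, yields $\WeakReg(T)\le\tlO(\sqrt{S(A+B)T})$ with high probability.

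The main obstacle is upgrading the in-expectation EXP3 guarantee to a high-probability bound under bandit feedback: the estimators $\hat\ell^k,\hat g^k$ have variance scaling like $1/\mu^k_a$ and $1/\nu^k_b$, so controlling the martingale deviations requires the standard EXP3.P machinery --- mixing in a small amount of uniform exploration (or clipping the estimates) to keep the importance weights bounded, combined with a Freedman/Bernstein-type concentration inequality for the accumulated estimated losses. A secondary, more routine issue is that the state sequence $s_1^k$, and hence the per-state horizon $T_s$, is adversarial and unknown in advance; I would make each instance anytime via a doubling trick or an adaptive step size, at the cost of constant and logarithmic factors that are absorbed into $\tlO$.
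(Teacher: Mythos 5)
Your proposal is correct and takes essentially the same approach as the paper's proof: the same decomposition of the weak regret into per-state, per-player external bandit regrets, independent importance-weighted EXP3/mirror-descent learners for each state, adaptive (EXP3++-style) step sizes to handle the unknown per-state counts $T_s$, and the concluding Cauchy--Schwarz bound $\sum_s \sqrt{(A+B)T_s}\le\sqrt{S(A+B)T}$. The only minor difference is in the estimator: the paper uses the shifted loss estimate $\wt{\ell}_k(a)=1-\mathds{1}[a^k=a]\,\brac{1-r(a,b^k)}/\mu_k(a)$, which is bounded above by $1$ with variance at most $A$ so that the standard EXP3 analysis adapts directly, whereas you use the raw importance-weighted estimate and invoke EXP3.P-type machinery for the high-probability statement.
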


\begin{theorem}[Weak regret for two-step turn-based game]
  \label{theorem:turn-based-mirror-descent}
  For one-step turn-based games ($H=2$), there exists a mirror descent
  type algorithm that 
  achieves weak regret bound
  $\WeakReg(T)\le \tlO(\sqrt{S(A+B)T})$ with high probability.
\end{theorem}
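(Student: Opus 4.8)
The plan is to exploit the turn-based structure to decouple the two players into two \emph{independent} online-learning problems, each solvable by a per-state mirror-descent (EXP3-type) algorithm. Since $H=2$ and each player plays once, I would assume without loss of generality (by the max/min symmetry, which amounts to negating rewards) that the max-player acts at step $1$ and the min-player acts at step $2$; the degenerate case where both steps belong to one player reduces to a single-agent MDP and is easier. Writing $r_1(s,a)$ and $r_2(s,b)$ for the two rewards (suppressing the dummy actions), the value factors as
\begin{equation*}
  V_1^{\mu,\nu}(s_1) = \sum_{a} \mu_1(a|s_1)\brac{ r_1(s_1,a) + [\P_1 V_2^{\nu}](s_1,a) }, \qquad V_2^{\nu}(s_2) = \sum_{b}\nu_2(b|s_2)\, r_2(s_2,b).
\end{equation*}
I define the max-player's per-episode payoff $g^k(s,a)\defeq r_1(s,a)+[\P_1 V_2^{\nu^k}](s,a)\in[0,2]$ and the induced step-$2$ occupancy $d^k(s_2)\defeq \sum_a \mu_1^k(a|s_1^k)\,\P_1(s_2|s_1^k,a)$.

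I would first split the weak regret by adding and subtracting $\sum_k V_1^{\mu^k,\nu^k}(s_1^k)$:
\begin{equation*}
  \WeakReg(K) = \underbrace{\brac{\max_\mu \sum_k V_1^{\mu,\nu^k}(s_1^k) - \sum_k V_1^{\mu^k,\nu^k}(s_1^k)}}_{=:\, \mathcal{R}_{\max}} + \underbrace{\brac{\sum_k V_1^{\mu^k,\nu^k}(s_1^k) - \min_\nu \sum_k V_1^{\mu^k,\nu}(s_1^k)}}_{=:\, \mathcal{R}_{\min}}.
\end{equation*}
Substituting the factorization gives $\mathcal{R}_{\max} = \max_\mu \sum_k \sum_a [\mu(a|s_1^k)-\mu_1^k(a|s_1^k)]\, g^k(s_1^k,a)$, and, after the fixed $r_1$ terms cancel, $\mathcal{R}_{\min} = \max_\nu \sum_k \sum_{s_2} d^k(s_2)\sum_b [\nu_2^k(b|s_2)-\nu(b|s_2)]\, r_2(s_2,b)$. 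Thus each term is the regret of a \emph{contextual} online-learning problem whose contexts are states: for $\mathcal{R}_{\max}$ the loss sequence $g^k$ is \emph{adversarial} (it depends on the adapting $\nu^k$ through $V_2^{\nu^k}$), while for $\mathcal{R}_{\min}$ the loss $r_2$ is \emph{fixed} and only the visitation weights $d^k$ vary.

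The essential point is that, because each player compares against a \emph{fixed} policy (this is exactly where the weak regret is needed, as opposed to the best-response comparator of $\Reg$), both problems reduce to running an independent adversarial bandit---mirror descent with an entropic regularizer and importance-weighted loss estimates, i.e.\ EXP3.P---at each state. For the max-player, at $s_1^k$ she plays $a_1^k\sim\mu_1^k$ and observes the realized return $r_1^k+r_2^k$, an unbiased estimate of $g^k(s_1^k,a_1^k)$, so the estimator $\hat g^k(s,a)=(r_1^k+r_2^k)\,\mathbf{1}[a_1^k=a]/\mu_1^k(a|s_1^k)$ drives the update; for the min-player, at $s_2^k\sim d^k$ she plays $b_2^k\sim\nu_2^k$ and observes the exact loss $r_2(s_2^k,b_2^k)$. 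Running one bandit per state and summing the single-bandit regret $\tlO(\sqrt{A N_s})$ (resp.\ $\tlO(\sqrt{B N_s})$) over states with visit counts $N_s$, Cauchy--Schwarz ($\sum_s\sqrt{N_s}\le\sqrt{S\sum_s N_s}$) yields $\mathcal{R}_{\max}\le\tlO(\sqrt{SAT})$ and $\mathcal{R}_{\min}\le\tlO(\sqrt{SBT})$, whose sum is $\tlO(\sqrt{S(A+B)T})$.

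The main obstacle, and the step requiring the most care, is passing from the \emph{expected} per-episode values $V_1^{\mu,\nu^k}(s_1^k)$ that define the weak regret to the \emph{importance-weighted empirical} quantities that the mirror-descent updates actually observe. I would handle this with (i) the high-probability EXP3.P analysis, whose explicit exploration and bias terms control the variance of the importance weights $1/\mu_1^k$ and $1/\nu_2^k$; and (ii) a martingale concentration argument (Freedman's inequality), applied to $\sum_k\sum_a[\mu(a)-\mu_1^k(a)](\hat g^k-g^k)$ for the max-player and, for the min-player, to the difference between the realized per-episode loss at $s_2^k$ and its conditional expectation $\sum_{s_2}d^k(s_2)(\cdots)$ under $s_2^k\sim d^k$. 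A union bound over the $S$ per-state instances (costing only a $\log S$ factor absorbed into $\tlO$) then converts the in-expectation bandit guarantees into the stated high-probability bound on $\WeakReg(T)$.
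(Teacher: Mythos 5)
Your proposal is correct and follows essentially the same route as the paper's proof: the identical decomposition of $\WeakReg$ into max-player and min-player terms, importance-weighted unbiased estimates of the counterfactual $Q$-values $Q_1^{\cdot,\nu^k}$ (made possible because the realized $s_2^k, b^k$ are drawn from the right conditional distribution given $a^k=a$), a martingale concentration step to pass from the occupancy-weighted expected loss to the realized contextual-bandit loss for the min-player, and per-state EXP3-type instances summed via Cauchy--Schwarz. The only cosmetic differences are that the paper uses a loss-shifted estimator (e.g.\ $2 - \frac{\indic{a^k=a}}{\mu^k(a|s_1)}[2 - (r_1+r_2)]$, which is bounded above) together with EXP3++ step-size scheduling and Azuma--Hoeffding, where you invoke EXP3.P with explicit exploration and Freedman's inequality.
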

\paragraph{Proof insights; bottleneck in multi-step
  case}
The improved regret bounds in
Theorem~\ref{theorem:simultaneous-mirror-descent}
and~\ref{theorem:turn-based-mirror-descent} are possible due to
availability of \emph{unbiased estimates of counterfactual Q
  values}, which in turn can be used in
mirror descent type algorithms with guarantees. Such unbiased
estimates are only achievable in one-step games as the two policies
are ``not intertwined'' in a certain sense.
In contrast, in multi-step games (where each player plays more than
once), such unbiased estimates of counterfactual Q values are no
longer available, and it is unclear how to construct a mirror descent
algorithm there. We believe it would be an important open question to
close the gap in multi-step games (as well as the gap between regret
and weak regret) for a further understanding of exploration in games.








\section{Conclusion}
In this paper, we studied the sample complexity of finding the
equilibrium policy in the setting of competitive reinforcement
learning, i.e. zero-sum Markov games with two players. We designed a
self-play algorithm for zero-sum games and showed that it can
efficiently find the Nash equilibrium policy in the exploration
setting through establishing a regret bound. Our algorithm---Value
Iteration with Upper and Lower Confidence Bounds---builds on a
principled extension of UCB/optimism into the two-player case by
constructing upper and lower bounds on the value functions and
iteratively solving general sum subgames.

Towards investigating the optimal runtime and sample complexity in
two-player games, we provided accompanying results showing that (1)
the computational efficiency of our algorithm can be improved by
explore-then-exploit type algorithms, which has a slightly worse regret;
(2) the state and action space dependence in the regret can be reduced
in the special case of one-step games via alternative mirror descent
type algorithms.

We believe this paper opens up many interesting directions for future
work. For example, can we design a computationally efficient
algorithms that achieves $\tlO(\sqrt{T})$ regret? What are the optimal
dependence of the regret on $(S,A,B)$ in multi-step games? Also, the
present results only work in tabular games, and it would be of
interest to investigate if similar results can hold in presence of
function approximation.
\section*{Acknowledgments}
We thank Sham Kakade and Haipeng Luo for valuable discussions on the
related work. We also thank the Simons Institute at Berkeley and its
Foundations of Deep Learning program in Summer 2019 for
hosting the authors and incubating our initial discussions.

\bibliographystyle{plainnat}
\bibliography{bib}

\appendix
\section{Proofs for Section~\ref{section:main}}
\subsection{Proof of Theorem~\ref{thm:main_sim}}
\label{appendix:proof-main_sim}

\paragraph{Notation:} To be clear from the context, we denote the
upper bound and lower bound $Q^\up$ and $Q^\low$ computed at the
$k$-th episode as $Q^{\up, k}$ and $Q^{\low, k}$, and policies
computed and used at the $k$-th episode as $\mu^k$ and $\nu^k$. 

\paragraph{Choice of bonus:} $\beta_t = c\sqrt{SH^2\iota/t}$ for sufficient large absolute constant $c$.

\begin{lemma}[ULCB]\label{lem:ULCB_sim}
With probability at least $1-p$, we have following bounds for any $(s, a, b, h, k)$:
\begin{align}
&V_h^{\up, k}(s) \ge \sup_{\mu} V_h^{\mu, \nu^k}(s),  &Q_h^{\up, k}(s, a, b) \ge& \sup_{\mu} Q_h^{\mu, \nu^k}(s, a, b) \label{eq:ucb_sim}\\
&V_h^{\low, k}(s) \le \inf_{\nu} V_h^{\mu^k, \nu}(s), &Q_h^{\low, k}(s, a, b) \le& \inf_{\nu} Q_h^{\mu^k, \nu}(s, a, b) \label{eq:lcb_sim}
\end{align}
\end{lemma}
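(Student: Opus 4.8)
The plan is to prove all four inequalities in \eqref{eq:ucb_sim}--\eqref{eq:lcb_sim} simultaneously by a single backward induction on $h$, valid for every episode $k$, carried out on one high-probability event that controls the transition estimation error. (Throughout, a superscript $k$ denotes the running quantity at the start of episode $k$, so $\widehat{\P}_h^k$ and $N_h^k$ are the episode-$k$ empirical transition and visitation count.) First I would rewrite the right-hand sides in Bellman form. Since fixing the opponent's policy turns the game into a single-agent MDP for the other player, setting $Q_h^{\dagger,\nu^k}(s,a,b)\defeq (r_h+\P_h V_{h+1}^{\dagger,\nu^k})(s,a,b)$ gives $\sup_\mu Q_h^{\mu,\nu^k}(s,a,b)=Q_h^{\dagger,\nu^k}(s,a,b)$ (the optimal MDP policy attains the pointwise maximum, so the $\sup$ passes through $\P_h$) together with $V_h^{\dagger,\nu^k}(s)=\max_{\phi\in\Delta_{\cA_h}}\sum_{a,b}\phi(a)\nu_h^k(b|s)Q_h^{\dagger,\nu^k}(s,a,b)$, and symmetrically $Q_h^{\mu^k,\dagger}\defeq r_h+\P_h V_{h+1}^{\mu^k,\dagger}$ with $V_h^{\mu^k,\dagger}(s)=\min_{\psi\in\Delta_{\cB_h}}\sum_{a,b}\mu_h^k(a|s)\psi(b)Q_h^{\mu^k,\dagger}(s,a,b)$. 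The four claims then reduce to $Q_h^{\up,k}\ge Q_h^{\dagger,\nu^k}$, $V_h^{\up,k}\ge V_h^{\dagger,\nu^k}$, $Q_h^{\low,k}\le Q_h^{\mu^k,\dagger}$, and $V_h^{\low,k}\le V_h^{\mu^k,\dagger}$.

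Next I would set up the good event $\fE$: for all $(s,a,b,h,k)$ with $t=N_h^k(s,a,b)\ge 1$, $\|(\widehat{\P}_h^k-\P_h)(\cdot|s,a,b)\|_1\le c'\sqrt{S\iota/t}$. Because the successive next-states observed at visits to a fixed $(s,a,b)$ at step $h$ are i.i.d.\ draws from $\P_h(\cdot|s,a,b)$, a standard $\ell_1$ deviation bound for empirical distributions together with a union bound over $(s,a,b,h)$ and over the count $t\in[K]$ gives $\Pr[\fE]\ge 1-p$ with $\iota=\log(SABT/p)$. On $\fE$, H\"older's inequality yields, for \emph{every} $V$ with $\|V\|_\infty\le H$, that $|[(\widehat{\P}_h^k-\P_h)V](s,a,b)|\le c'H\sqrt{S\iota/t}\le\beta_t$ under the choice $\beta_t=c\sqrt{H^2S\iota/t}$. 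The induction itself starts at $h=H+1$, where all four value functions vanish. For the step, assume $V_{h+1}^{\up,k}\ge V_{h+1}^{\dagger,\nu^k}$ and $V_{h+1}^{\low,k}\le V_{h+1}^{\mu^k,\dagger}$ pointwise. On the upper side, if $N_h^k(s,a,b)=0$ then $Q_h^{\up,k}=H\ge Q_h^{\dagger,\nu^k}$; otherwise, using $\hat r_h=r_h$, monotonicity of $\widehat{\P}_h^k$ on the inductive hypothesis, and $\fE$ applied to the bounded function $V_{h+1}^{\dagger,\nu^k}$,
\[
Q_h^{\up,k}(s,a,b)\ \ge\ r_h+[\widehat{\P}_h^k V_{h+1}^{\dagger,\nu^k}](s,a,b)+\beta_t\ \ge\ r_h+[\P_h V_{h+1}^{\dagger,\nu^k}](s,a,b)\ =\ Q_h^{\dagger,\nu^k}(s,a,b),
\]
and the clipping at $H$ only helps since $Q_h^{\dagger,\nu^k}\le H$. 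The lower side is entirely symmetric, giving $Q_h^{\low,k}\le Q_h^{\mu^k,\dagger}$.

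The passage from the $Q$-level to the $V$-level is where the \nashgeneral~subroutine is used. Since $(\mu_h^k,\nu_h^k)$ is the general-sum equilibrium of the payoff pair $(Q_h^{\up,k},Q_h^{\low,k})$, property \eqref{equation:nash-general-sum} says $\mu_h^k$ is a max-player best response in payoff $Q_h^{\up,k}$ against $\nu_h^k$, so $V_h^{\up,k}(s)=\max_\phi\sum_{a,b}\phi(a)\nu_h^k(b|s)Q_h^{\up,k}(s,a,b)$; combining with $Q_h^{\up,k}\ge Q_h^{\dagger,\nu^k}$ and the Bellman formula for $V_h^{\dagger,\nu^k}$ gives $V_h^{\up,k}(s)\ge V_h^{\dagger,\nu^k}(s)$. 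Symmetrically, $\nu_h^k$ is a min-player best response in $Q_h^{\low,k}$, so $V_h^{\low,k}(s)=\min_\psi\sum_{a,b}\mu_h^k(a|s)\psi(b)Q_h^{\low,k}(s,a,b)\le V_h^{\mu^k,\dagger}(s)$, closing the induction.

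I expect the main obstacle to be precisely the concentration step: the value functions $V_{h+1}^{\dagger,\nu^k}$ and $V_{h+1}^{\mu^k,\dagger}$ against which $\widehat{\P}_h^k$ must concentrate are \emph{data-dependent} through $\nu^k$ and $\mu^k$, so a Hoeffding bound for a single fixed function is invalid. I would resolve this by making $\fE$ uniform over all $V$ with $\|V\|_\infty\le H$ via the $\ell_1$ concentration above, which is exactly what forces the extra $\sqrt{S}$ in $\beta_t=c\sqrt{H^2S\iota/t}$ and is the one place the argument pays for the adaptive-opponent structure. A secondary point to state carefully is that the upper and lower chains couple only through the shared policies $(\mu^k,\nu^k)$, and the equilibrium property must be invoked in the ``cross'' form (max-player optimality under $Q^\up$, min-player optimality under $Q^\low$) for the per-step $V$-comparison to go through.
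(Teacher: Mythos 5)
Your proposal is correct and follows essentially the same route as the paper's proof: backward induction on $h$ for each fixed $k$, with the $Q$-level inequality obtained from the inductive hypothesis plus uniform concentration of $\widehat{\P}_h^k$ against the data-dependent value function $V_{h+1}^{\dagger,\nu^k}$, and the $V$-level inequality obtained from the best-response property \eqref{equation:nash-general-sum} of the \nashgeneral~equilibrium policies. The only difference is in how the uniform concentration is proved: the paper uses an $\epsilon$-covering of the class of $[0,H]$-bounded value functions (Lemma~\ref{lem:uniform_concern_sim}), while you use an $\ell_1$ deviation bound on $\widehat{\P}_h^k(\cdot|s,a,b)$ plus H\"older's inequality---both standard, and both yielding the same bonus $\beta_t = c\sqrt{H^2S\iota/t}$.
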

\begin{proof}
By symmetry, we only need to prove the statement \eqref{eq:ucb_sim}. For each fixed $k$, we prove this by induction from $h=H+1$ to $h=1$. For base case, we know at the $(H+1)$-th step, 
$V_{H+1}^{\up, k}(s) = \sup_{\mu} V_{H+1}^{\mu, \nu^k}(s) = 0$.

Now, assume the left inequality in \eqref{eq:ucb_sim} holds for $(h+1)$-th step, for the $h$-th step, we first recall the updates for $Q$ functions respectively:
\begin{align*}
Q_{h}^{\up, k}(s, a, b) = & \min\set{r_h(s, a, b) +
   [\widehat{\P}_{h}^k V_{h+1}^{\up, k}](s, a, b) +
   \beta_t, H}\\
\sup_{\mu} Q_h^{\mu, \nu^k}(s, a, b)
=& r_h(s, a, b) + [\P_{h} \sup_{\mu}V_{h+1}^{\mu, \nu^k}](s, a, b)
\end{align*}
In case of $Q_{h}^{\up, k}(s, a, b) = H$, the right inequality in \eqref{eq:ucb_sim} clearly holds. Otherwise, we have:
\begin{align*}
Q_{h}^{\up, k}(s, a, b) - \sup_{\mu} Q_h^{\mu, \nu^k}(s, a, b)
=& [\widehat{\P}_{h}^k V_{h+1}^{\up, k}](s, a, b) - [\P_{h}^k \sup_{\mu}V_{h+1}^{\mu, \nu^k}](s, a, b) +  \beta_t\\
=& [\widehat{\P}_{h}^k (V_{h+1}^{\up, k} - \sup_{\mu}V_{h+1}^{\mu, \nu^k})](s, a, b) - [(\widehat{\P}_{h}^k -\P_{h}) \sup_{\mu}V_{h+1}^{\mu, \nu^k}](s, a, b) +  \beta_t
\end{align*}
Since $\widehat{\P}_{h}^k$ preserves the positivity, by induction assumption, we know the first term is positive. By Lemma \ref{lem:uniform_concern_sim}, we know the second term $\ge -\beta_t$. This finishes the proof of the right inequality in \eqref{eq:ucb_sim}.

To prove the left inequality in \eqref{eq:ucb_sim}, again recall the updates for $V$ functions respectively:
\begin{align*}
  V_{h}^{\up, k}(s) =
  & \mu^k_h(s)\trans Q_h^{\up, k}(s, \cdot, \cdot)
    \nu^k_h(s) = \max_{\phi \in \Delta_\cA}
    \phi\trans Q_h^{\up, k}(s, \cdot, \cdot)
    \nu^k_h(s) \\ 
  \sup_{\mu} V_h^{\mu, \nu^k}(s) =
  &\max_{\phi \in \Delta_\cA}
    \phi\trans [\sup_{\mu} Q_h^{\mu,
    \nu^k}(s, \cdot, \cdot)] \nu^k_h(s) 
\end{align*}
where the first equation is by the definition of policy $\mu^k$ the algorithm picks. Therefore:
\begin{equation*}
V_{h}^{\up, k}(s) - \sup_{\mu} V_h^{\mu, \nu^k}(s)
\ge \max_{\phi \in \Delta_\cA} \phi\trans [Q_h^{\up, k} - \sup_{\mu} Q_h^{\mu, \nu^k}](s, \cdot, \cdot) \nu^k_h(s) \ge 0.
\end{equation*}
This finishes the proof.
\end{proof}


\begin{lemma}[Uniform Concentration]\label{lem:uniform_concern_sim}
  Consider value function class
  \begin{equation*}
    \mathcal{V}_{h+1} =
    \set{V:\cS_{h+1}\to\R~\mid~V(s)\in[0,H]~\textrm{for
        all}~s\in \cS_{h+1}}.
  \end{equation*}
  There exists an absolute constant $c$, with probability at least $1-p$, we have:
  \begin{equation*}
    \abs{[(\hat{\P}^k_h - \P_h)V](s, a, b)} \le
    c\sqrt{SH^2\iota/N^k_h(s, a, b)} \quad \textrm{for all}~ (s, a, b,
    k, h) ~\textrm{and all}~ V\in \mathcal{V}_{h+1}. 
  \end{equation*}
\end{lemma}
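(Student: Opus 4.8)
The plan is to establish the bound by combining Hoeffding's inequality for a \emph{fixed} value function, a covering argument to achieve uniformity over the class $\mathcal{V}_{h+1}$, and a union bound over the (random) number of samples. The reason a uniform bound is needed — rather than a pointwise bound for a single $V$ — is that the value functions $V_{h+1}^{\up,k}$ and $V_{h+1}^{\low,k}$ produced by Algorithm~\ref{algorithm:ucbvi_sim} are themselves computed from the collected data, hence statistically dependent on $\hat{\P}_h^k$; Hoeffding applied to such a data-dependent $V$ is not valid, so we must control $[(\hat{\P}_h^k - \P_h)V](s,a,b)$ simultaneously over all $V \in \mathcal{V}_{h+1}$.

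First I would fix a tuple $(s,a,b,h)$ and condition on the event that $(s,a,b)$ has been visited exactly $n$ times at step $h$. By the Markov property, conditioned on reaching $(s,a,b)$ at step $h$, each subsequent state is an i.i.d.\ draw from $\P_h(\cdot\mid s,a,b)$ regardless of the adaptive policy; hence the $\hat{\P}_h$ formed from these $n$ visits is an empirical average of $n$ i.i.d.\ distributions. For a fixed $V \in \mathcal{V}_{h+1}$, the quantity $[(\hat{\P}_h - \P_h)V](s,a,b)$ is an average of $n$ i.i.d.\ terms each lying in $[0,H]$, so Hoeffding's inequality gives $|[(\hat{\P}_h - \P_h)V](s,a,b)| \le H\sqrt{\log(2/\delta)/(2n)}$ with probability at least $1-\delta$.

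To upgrade this to a uniform bound over $\mathcal{V}_{h+1}$, I would take an $\epsilon$-net $\mathcal{N}_\epsilon$ of $\mathcal{V}_{h+1}$ in the $\ell_\infty$ norm. Since $\mathcal{V}_{h+1} \subseteq [0,H]^{\cS_{h+1}}$ is an $S$-dimensional box, its covering number obeys $|\mathcal{N}_\epsilon| \le (1+H/\epsilon)^S$. Applying the fixed-$V$ bound with failure probability $\delta/|\mathcal{N}_\epsilon|$ and union-bounding over the net yields, for every $V' \in \mathcal{N}_\epsilon$, a deviation of order $H\sqrt{(\log(2/\delta) + S\log(1+H/\epsilon))/n}$. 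For an arbitrary $V \in \mathcal{V}_{h+1}$, choosing the nearest net point $V'$ incurs only the extra error $|[(\hat{\P}_h - \P_h)(V-V')](s,a,b)| \le \|\hat{\P}_h(\cdot|s,a,b) - \P_h(\cdot|s,a,b)\|_1 \, \|V - V'\|_\infty \le 2\epsilon$, using that both are probability distributions. Taking $\epsilon$ polynomially small in $T$ (e.g.\ $\epsilon = 1/T$) makes the discretization error negligible while contributing only an $S\log T$ term inside the square root, which is absorbed into $S\iota$.

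Finally I would union-bound the net estimate over all $(s,a,b,h) \in \cS_h \times \cA_h \times \cB_h \times [H]$ and over every possible count $n \in \{1,\dots,T\}$ (replacing $n$ by the realized $N_h^k(s,a,b)$ afterwards), which introduces only an additional $\log(SABHT)$ factor inside the logarithm; setting $\delta = p$ and collecting constants then gives $|[(\hat{\P}_h^k - \P_h)V](s,a,b)| \le c\sqrt{SH^2\iota/N_h^k(s,a,b)}$ for all $(s,a,b,k,h)$ and all $V \in \mathcal{V}_{h+1}$, with $\iota = \log(SABT/p)$, for a suitable absolute constant $c$. The main technical obstacle is precisely the uniformity over the infinite (continuum) class $\mathcal{V}_{h+1}$, which the $\ell_\infty$-covering resolves at the cost of the $\sqrt{S}$ factor appearing in the bound; the secondary subtlety — the random, data-dependent sample count $N_h^k(s,a,b)$ — is handled cleanly by conditioning on the number of visits and union-bounding over it.
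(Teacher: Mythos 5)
Your proposal is correct and follows essentially the same route as the paper's proof: Hoeffding's inequality for a fixed $V$, an $\ell_\infty$ $\epsilon$-net over $\mathcal{V}_{h+1}$ (giving the $\sqrt{S}$ factor), a discretization error bounded via the $\ell_1$--$\ell_\infty$ pairing, and union bounds over indices and visit counts. The only cosmetic difference is your choice $\epsilon = 1/T$ versus the paper's $\epsilon = c\sqrt{H^2 S \iota / N_h^k(s,a,b)}$, both of which make the discretization term negligible; if anything, your explicit handling of the random count $N_h^k(s,a,b)$ is slightly more careful than the paper's.
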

\begin{proof}
  We show this for one $(s,a,b,k,h)$; the rest follows from a union
  bound over these indices (and results in a larger logarithmic
  factor.) Throughout this proof we let $c>0$ to be an absolute
  constant that may vary from line to line.

  Let $\mc{V}_\eps$ be an $\eps$-covering of $\mc{V}_{h+1}$ in
  the $\infty$ norm (that is, for any $V\in\mc{V}_{h+1}$ there exists
  $\what{V}\in\mc{V}_\eps$ such that $\sup_s |V(s) - \what{V}(s)| \le 
  \eps$.) We have $|\mc{V}_\eps|\le (1/\eps)^S$, and by Hoeffding
  inequality and a union bound (over both $\what{V}$ and
  $N_h^k\in[K]$), we have with probability at least
  $1-p$ that
  \begin{align*}
    \abs{\sup_{\what{V}\in\mc{V}_\eps} \brac{(\hat{\P}_h^k - \P_h)
    \what{V}}} \le
    \sqrt{\frac{H^2(S\log(1/\eps) + \log(K/p))}{N_h^k(s,a,b)}}.
  \end{align*}
  Taking $\eps=c\sqrt{H^2S\iota/K}$, the above implies that
  \begin{align*}
    \abs{\sup_{\what{V}\in\mc{V}_\eps} \brac{(\hat{\P}_h^k -
    \P_h)\what{V}}} \le
    c\sqrt{\frac{H^2S\iota}{N_h^k(s, a, b)}}.
  \end{align*}
  Meanwhile, with this choice of $\eps$, for any $V\in\mc{V}_{h+1}$,
  there exists $\what{V}\in\mc{V}_\eps$ such that $\sup_s |V(s) -
  \what{V}(s)|\le \eps$, and therefore
  \begin{align*}
    \abs{\brac{(\hat{\P}_h^k - \P_h)V} - \brac{(\hat{\P}_h^k -
    \P_h)\what{V}}} \le 2\eps = c\sqrt{\frac{H^2S\iota}{K}} \le
    c\sqrt{\frac{H^2S\iota}{N_h^k(s,a,b)}}.
  \end{align*}
  Combining the preceding two bounds, we have that the desired
  concentration holds for all $V\in\mc{V}_{h+1}$.
  
\end{proof}

\begin{proof}[Proof of Theorem \ref{thm:main_sim}]
By Lemma \ref{lem:ULCB_sim}, we know the regret,
\begin{equation*}
\text{Regret}(K) = \sum_{k=1}^K \left[\sup_{\mu}\sind{V}{\mu, \nu^k}{1} (\ind{s}{k}{1}) - \inf_{\nu}\sind{V}{\mu^k, \nu}{1} (\ind{s}{k}{1})\right] 
\le \sum_{k=1}^K [V_1^{\up, k}(\ind{s}{k}{1})  - V_1^{\low, k}(\ind{s}{k}{1})]
\end{equation*}
On the other hand, by the updates in Algorithm \ref{algorithm:ucbvi_sim}, we have:
\begin{align*}
[V_h^{\up, k} - V_h^{\low, k}](\ind{s}{k}{h})
=& \mu^k_h(s^k_h)\trans [Q_h^{\up, k} - Q_h^{\low, k}](s^k_h, \cdot, \cdot) \nu^k_h(s^k_h), \\
=& [Q_h^{\up, k} - Q_h^{\low, k}](s^k_h, a^k_h, b^k_h) + \xi^k_h\\
\le & [\widehat{\P}_{h}^k (V_{h+1}^{\up, k} - V_{h+1}^{\low, k})](s^k_h, a^k_h, b^k_h) + 2\beta_h^k + \xi_h^k \\
\le & [\P (V_{h+1}^{\up, k} - V_{h+1}^{\low, k})](s^k_h, a^k_h, b^k_h) + 4\beta_h^k + \xi_h^k\\
= & (V_{h+1}^{\up, k} - V_{h+1}^{\low, k})(\ind{s}{k}{h+1}) + 4\beta_h^k + \xi_h^k + \zeta_h^k
\end{align*}
the last inequality is due to Lemma
\ref{lem:uniform_concern_sim}. (Recall that $\beta_h^k\defeq
\beta_{N_h^k(s_h^k, a_h^k,
  b_h^k)}=c\sqrt{H^2S\iota/N_h^k(s_h^k,a_h^k,b_h^k)}$ when $N_h^k\ge
1$. In the case when
  $N_h^k=0$, we can still define $\beta_h^k= \beta_0 \defeq
  c\sqrt{H^2S\iota}$, and the above inequality still holds as we have
  $Q_h^{\up, k}-Q_h^{\low,k}=H\le \beta_0$.)
Above, $\xi_h^k$ and $\zeta_h^k$ are defined as
\begin{align*}
\xi_h^k =& \E_{a \sim \mu^k_h(s^k_h), b\sim \nu^k_h(s^k_h)} [Q_h^{\up,
           k} - Q_h^{\low, k}](s^k_h, a, b)  - [Q_h^{\up, k} -
           Q_h^{\low, k}](s^k_h, a^k_h, b^k_h)\\ 
\zeta_h^k =& \E_{s \sim \P_h(\cdot|s^k_h, a^k_h, b^k_h)}
             [(V_{h+1}^{\up, k} - V_{h+1}^{\low, k})](s) -
             [V_{h+1}^{\up, k} - V_{h+1}^{\low, k})](\ind{s}{k}{h+1}) 
\end{align*}
Both $\xi_h^k$ and $\zeta_h^k$ are martingale difference sequence,
therefore by the Azuma-Hoeffding inequality we have with probability
$1-p$ that
\begin{equation*}
  \sum_{k, h} \xi_h^k \le \cO(\sqrt{HT\iota}) \quad {\rm and} \quad
  \sum_{k, h} \zeta_h^k \le \cO(\sqrt{HT\iota}). 
\end{equation*}
Therefore, by our choice of bonus $\beta_t$ and the Pigeonhole
principle, we have
\begin{align*}
  & \quad \sum_{k=1}^K \brac{V_1^{\up, k}(\ind{s}{k}{1})  - V_1^{\low,
    k}(\ind{s}{k}{1})}
    \le \sum_{k, h} \paren{4\beta^k_h + \xi^k_h + \zeta^k_h} \\
  & \le \sum_{h,s\in\cS_h,a\in\cA_h,b\in\cB_h} c\cdot
    \sum_{t=1}^{N^K_h(s,a,b)} \sqrt{\frac{H^2S\iota}{t}} + \cO(\sqrt{HT\iota}) \\
  & = \sum_{h,s\in\cS_h,a\in\cA_h,b\in\cB_h}  \cO\paren{\sqrt{H^2S\iota
    \cdot N^K_h(s, a, b)}} + \cO(\sqrt{HT\iota}) \\
  & \le \sum_{h\in[H]} \cO\paren{\sqrt{H^2S^2A_hB_hK\iota}} \le
    \cO\paren{\sqrt{H^4S^2\brac{\max_h A_hB_h}K\iota}} =
    \cO\paren{\sqrt{H^3S^2\brac{\max_h  A_hB_h}T\iota}}.
\end{align*}
This finishes the proof.
\end{proof}

\subsection{Proof of Corollary~\ref{corollary:pac}}
\label{appendix:proof-pac}
The proof is based on a standard online-to-batch conversion
(e.g. \citep[Section 3.1,][]{jin2018q}.)
Let $(\what{\mu}^k, \what{\nu}^k)$ denote the policies deployed by the
VI-ULCB algorithm in episode $k$. We sample $\what{\mu},\what{\nu}$
uniformly as
\begin{equation*}
  \what{\mu} \sim \Unif\set{\mu^1,\dots,\mu^K}~~~{\rm
    and}~~~\what{\nu} \sim \Unif\set{\nu^1,\dots,\nu^K}.
\end{equation*}
Taking expectation with respect to this sampling gives
\begin{align*}
  & \quad \E_{\what{\mu},\what{\nu}}\brac{ V^{\dagger,\what{\nu}}(s_1) -
  V^{\what{\mu},\dagger}(s_1) }  = \frac{1}{K}\sum_{k=1}^K \brac{ V^{\dagger,\nu^k}(s_1) -
  V^{\mu^k,\dagger}(s_1) } \\
  & = \frac{1}{K}\Reg(K) \le \tlO\paren{
  \frac{\sqrt{H^3S^2ABT}}{K} } \le \tlO\paren{
  \sqrt{\frac{H^4S^2AB}{K}} },
\end{align*}
where we have applied Theorem~\ref{thm:main_sim} to bound the regret
with high probability.
Choosing $K\ge \tlO(H^4S^2AB/\epsilon^2)$, the right hand side is upper
bounded by $\epsilon$, which finishes the proof.

\section{Proofs for Section \ref{section:efficient-algorithm}}\label{appendix:proof-polytime_guarantee}
In this section, we prove Theorem \ref{thm:PAC_explore} and Corollary \ref{cor:polytime_guarantee} based on the following lemma about subroutine \explore. We will defer the proof of this Lemma to Appendix \ref{sec:proof_explore}.

\begin{lemma}\label{lem:policy_evaluation}
Under the preconditions of Theorem \ref{thm:PAC_explore}, with probability at least $1-p$, for any policy $\mu, \nu$, we have:
\begin{equation} 
|\hat{V}^{\mu, \nu}_{1}(s_1) -V^{\mu, \nu}_{1}(s_1)| \le \epsilon/2
\end{equation}
where $\hat{V}, V$ are the value functions of $\mdp(\hat{\P}, \hat{r})$ and $\mdp(\P, r)$.
\end{lemma}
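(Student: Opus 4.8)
The plan is to reduce the uniform policy-evaluation bound to three ingredients: a value-difference decomposition, the uniform concentration already established in Lemma~\ref{lem:uniform_concern_sim}, and a coverage guarantee for the \explore~subroutine (the part I would defer to Appendix~\ref{sec:proof_explore}). First I would invoke the standard value-difference (simulation) lemma, which for any fixed policy pair $(\mu,\nu)$ expresses the gap as a telescoping sum taken along true-model trajectories:
\[
\hat{V}^{\mu,\nu}_1(s_1) - V^{\mu,\nu}_1(s_1) = \E_{\mu,\nu}\brac{\sum_{h=1}^H \big((\hat{r}_h - r_h)(s_h,a_h,b_h) + [(\hat{\P}_h - \P_h)\hat{V}^{\mu,\nu}_{h+1}](s_h,a_h,b_h)\big)},
\]
where the expectation is over trajectories generated in the true game $\mdp(\P,r)$ under $(\mu,\nu)$, and $\hat{V}^{\mu,\nu}$ denotes the value function in $\mdp(\hat{\P},\hat{r})$. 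The advantage of decomposing along the true model is that the visitation weights appearing here are exactly the quantities the exploration guarantee controls.

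Next, I would bound the transition term pointwise. Since $\hat{V}^{\mu,\nu}_{h+1}$ takes values in $[0,H]$, it lies in the class $\mathcal{V}_{h+1}$, so Lemma~\ref{lem:uniform_concern_sim} gives $|[(\hat{\P}_h-\P_h)\hat{V}^{\mu,\nu}_{h+1}](s,a,b)| \le c\sqrt{SH^2\iota/N_h(s,a,b)}$ simultaneously for all $(s,a,b,h)$. This is where uniformity over policies comes for free: the concentration holds over the entire value class, so it applies to the policy-dependent function $\hat{V}^{\mu,\nu}_{h+1}$ with no additional union bound over $(\mu,\nu)$. The evaluation gap is thus controlled by $\sum_{h}\E_{\mu,\nu}\brac{\min\set{H,\, c\sqrt{SH^2\iota/N_h(s_h,a_h,b_h)}}}$, while the reward term is handled analogously: on any triple visited during exploration we recorded $\hat{r}_h = r_h$ exactly, so this term is supported only on rarely-visited triples.

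Finally, I would invoke the coverage guarantee of \explore. Adapting the reward-free analysis of~\citet{jin2019reward} to the joint action set $\cA\times\cB$, the subroutine partitions triples into \emph{significant} ones (reachable with non-negligible probability under some policy), which are each visited $\Omega(H^3S\iota/\epsilon^2)$ times, and \emph{insignificant} ones, whose total visitation mass under any policy is $O(\epsilon/H)$. On significant triples the per-step width is at most $O(\epsilon/H)$ and $\hat{r}_h=r_h$; on insignificant triples the small visitation mass multiplied by the trivial bound $H$ contributes at most $O(\epsilon)$. Summing over the $H$ steps and optimizing constants yields $\sum_h \E_{\mu,\nu}[\cdots]\le \epsilon/2$ uniformly over $(\mu,\nu)$, which is the claim.

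The main obstacle I anticipate is establishing this coverage guarantee uniformly over all policies, i.e. decoupling the exploration phase---which is driven by the artificial single-target rewards $\tilde{r}$ and is completely agnostic to the downstream $(\mu,\nu)$---from the evaluation phase. The technical heart is showing that maximizing the reaching probability of each target state in turn produces visitation counts large enough that the weighted concentration width is small against every policy simultaneously; this requires the significant/insignificant partition together with the fact that an insignificant triple can contribute at most its (small) visitation probability to any value function. Carrying this reward-free argument through in the two-player setting, where the effective action set is the product $\cA\times\cB$ of size $AB$, is exactly the content I would develop in Appendix~\ref{sec:proof_explore}.
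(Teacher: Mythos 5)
Your proposal follows essentially the same route as the paper: the paper proves this lemma by treating the fixed policy pair $(\mu,\nu)$ as a single policy over the joint action set $\cA\times\cB$, thereby reducing the Markov game to a single-agent MDP, and then invoking the reward-free exploration analysis of \citet{jin2019reward} (their Lemma 3.6), noting only that the additional reward-estimation error is of the same or lower order than the transition error. Your sketch simply unpacks the internals of that cited analysis---the simulation-lemma decomposition, uniform concentration over the bounded value class, and the significant/insignificant coverage argument over the product action space---so it matches the paper's proof in substance.
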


\subsection{Proof of Theorem \ref{thm:PAC_explore}}
Since both $\inf$ and $\sup$ are contractive maps, by Lemma \ref{lem:policy_evaluation}, we have:
\begin{align*}
&|\inf_\nu V_1^{\hat{\mu}, \nu}(s_1) - \inf_\nu \hat{V}_1^{\hat{\mu}, \nu}(s_1)| \le \epsilon/2 \\
&|\sup_\mu V_1^{\mu, \hat{\nu}}(s_1) - \sup_\mu \hat{V}_1^{\mu, \hat{\nu}}(s_1)| \le \epsilon/2
\end{align*}
Since $(\hat{\mu},\hat{\nu})$ are the Nash Equilibria for $\mdp(\hat{\P},
\hat{r})$, we have $\inf_\nu \hat{V}_1^{\hat{\mu}, \nu}(s_1) =
\sup_\mu \hat{V}_1^{\mu, \hat{\nu}}(s_1)$. This gives: 
\begin{align*}
\sup_\mu V_1^{\mu, \hat{\nu}}(s_1) - \inf_\nu V_1^{\hat{\mu}, \nu}(s_1)
\le& |\sup_\mu V_1^{\mu, \hat{\nu}}(s_1) - \sup_\mu \hat{V}_1^{\mu, \hat{\nu}}(s_1)| 
+ | \sup_\mu \hat{V}_1^{\mu, \hat{\nu}}(s_1) - \inf_\nu \hat{V}_1^{\hat{\mu}, \nu}(s_1)|\\
&+ | \inf_\nu \hat{V}_1^{\hat{\mu}, \nu}(s_1) - \inf_\nu V_1^{\hat{\mu}, \nu}(s_1)| \le \epsilon.
\end{align*}
which finishes the proof.

\subsection{Proof of Corollary \ref{cor:polytime_guarantee}}
Recall that Theorem \ref{thm:PAC_explore} requires
$T_0 = c(H^5S^2AB\iota/\epsilon ^2 + H^7S^{4}AB \iota ^3/\epsilon)$ episodes to obtain an $\epsilon$-optimal policies in the sense:
\begin{equation*}
\sup_\mu V_1^{\mu, \hat{\nu}}(s_1) - \inf_\nu V_1^{\hat{\mu}, \nu}(s_1) \le \epsilon.
\end{equation*}

Therefore, if the agent plays the Markov game for $T$ episodes, it can use first $T_0$ episodes to explore to find $\epsilon$-optimal policies $(\hat{\mu}, \hat{\nu})$, and use the remaining $T-T_0$ episodes to exploit (always play $(\hat{\mu}, \hat{\nu})$). Then, the total regret will be upper bounded by:
\begin{equation*}
\Reg(K) \le T_0 \times 1 + (T - T_0) \times \epsilon
\end{equation*}
Finally, choose
\begin{equation*}
\epsilon = \max \left\{\left(\frac{H^5S^2AB\iota}{T}\right)^{\frac{1}{3}}, \left(\frac{H^7S^4AB\iota^3}{T}\right)^{\frac{1}{2}}\right\}
\end{equation*}
we finishes the proof.
\section{Proofs for Section~\ref{section:optimality}}
\label{appendix:proof-optimality}

\subsection{Proof of
  Theorem~\ref{theorem:simultaneous-mirror-descent}}
The theorem is almost an immediate consequence of the general result
on mirror descent~\citep{rakhlin2013optimization}. However, for
completeness, we provide a self-contained proof here. The main
ingredient in our proof is to show that a ``natural'' loss estimator
satisfies desirable properties---such as unbiasedness and bounded
variance---for the standard analysis of mirror descent type algorithms
to go through.

\paragraph{Special case of $S=1$}
We first deal with the case of $S=1$. As the game only has one step
($H=1$), it reduces to
a zero-sum matrix game with a noisy bandit feedback, i.e.
there is an unknown payoff matrix $\mat{R}\in[0,1]^{A\times B}$,
the algorithm plays policies $(\mu_k, \nu_k)\in\Delta_A\times
\Delta_B$, observes feedback $r(a^k, b^k)=\mat{R}_{a^k,b^k}$ where
$(a^k, b^k)\sim\mu_k\times \nu_k$, and the weak regret has form
\begin{equation*}
  \WeakReg(T) = \max_{\mu} \sum_{k=1}^T \mu^\top \mat{R}\nu_k -
  \min_{\nu} \sum_{k=1}^K \mu_k^\top \mat{R}\nu.
\end{equation*}
Note that this regret can be decomposed as
\begin{equation*}
  \WeakReg(T) = \underbrace{\max_{\mu} \sum_{k=1}^T \mu^\top
    \mat{R}\nu_k - \sum_{k=1}^T 
  \mu_k^\top\mat{R}\nu_k}_{\rm I} + \underbrace{\sum_{k=1}^T
  \mu_k^\top\mat{R}\nu_k - 
  \min_{\nu} \sum_{k=1}^T \mu_k^\top \mat{R}\nu}_{\rm II}.
\end{equation*}
We now describe the mirror descent algorithm for the max-player and
show that it achieves bound ${\rm I}\le \tlO(\sqrt{AT})$ regardless of
the strategy of the min-player. A similar argument on the min-player
will yield a regret bound ${\rm II}\le \tlO(\sqrt{BT})$ on the second
part of the above regret and thus show
$\WeakReg(T)\le \tlO(\sqrt{(A+B)T})$.

For all $k\in[T]$, define the loss vector $\ell_k\in\R^A$ for the
max-player as
\begin{equation*}
  \ell_k(a) \defeq e_a^\top \mat{R} \nu_k,~~~\textrm{for
    all}~a\in\mc{A}. 
\end{equation*}
With this definition the regret I can be written as
\begin{equation*}
  {\rm I} = \max_{a} \sum_{k=1}^T \ell_k(a) - \sum_{k=1}^T
  \mu_k(a)\ell_k(a). 
\end{equation*}
Now, define the loss estimate $\wt{\ell}_k(a)$ as
\begin{equation*}
  \wt{\ell}_k(a) \defeq 1 - \frac{\indic{a^k=a}}{\mu_k(a)} \brac{1 -
    r(a, b^k)}. 
\end{equation*}
We now show that this loss estimate satisfies the following properties:
\begin{enumerate}[(1)]
\item Computable: the reward
  $r(a,b^k)$ is seen when $a=a^k$, and the loss estimate is equal to 1
  for all $a\neq a^k$.
\item Bounded: we have $\wt{\ell}_k(a) \le 1$ almost surely for all
  $k$ and $a$.
\item Unbiased estimate of $\ell_k(a)$. For any fixed state
  $a\in\mc{A}$, we have
  \begin{align*}
    & \quad \E\brac{\wt{\ell}_k(a) | \mc{F}_{k-1}} = 1 - \mu_k(a)
      \cdot \frac{1}{\mu_k(a)} \E_{b^k\sim \nu_k}\brac{1 - r(a, b^k)} \\
    & = 1 - \paren{1 - \E_{b^k\sim\nu_k}[r(a, b^k)]} =
      E_{b^k\sim\nu_k}[r(a, b^k)] = e_a^\top \mat{R}\nu_k = \ell_k(a).
  \end{align*}
\item Bounded variance: one can check that
  \begin{align*}
    & \quad \E\brac{ \sum_{a\in\mc{A}} \mu_k(a)\wt{\ell}_k(a)^2
      \big| \mc{F}_{k-1}} \\
    & = \E_{b^k\sim\nu_k}\brac{\sum_{a\in\mc{A}} \mu_k(a) \paren{1 - 2\paren{1
      - r(a, b^k)}} + 
      \sum_{a\in\mc{A}} (1 - r(a, b^k))^2}.
  \end{align*}
  Letting $y_a\defeq 1 - r(a, b^k)$, we have $y_a\in[0,1]$ almost
  surely (though it is random), and thus
  \begin{align*}
    \sum_a \mu_k(a)(1 - 2y_a) + \sum_a y_a^2  \le 1 - 2\min_a y_a +
    \sum_a y_a^2 = \sum_{a\neq a_*} y_a^2 + (y_{a^\star} - 1)^2 \le A,
  \end{align*}
  where $a^\star=\argmin_{a\in\mc{A}} y_a$.
\end{enumerate}
Therefore, adapting the proof of standard regret-based bounds for
the mirror descent (EXP3) algorithm (e.g.~\citep[Theorem
11.1]{lattimore2018bandit}), using the loss estimate $\wt{\ell_k}(a)$
and taking the step-size to be $\eta_+\equiv \sqrt{\log A/AT}$,
we have the regret bound
\begin{equation*}
  \WeakReg_+ \le C\cdot \sqrt{AT\log A},
\end{equation*}
where $C>0$ is an absolute constant. This shows the desired bound
$\tlO(\sqrt{AT})$ for term I in the regret, and a similar bound
$\tlO(\sqrt{BT})$ holds for term II by using the same algorithm on the
min-player. 

\paragraph{Case of $S>1$}
The case of $S>1$ can be viewed as $S$ independent zero-sum matrix
games. We can let both players play the each matrix game independently
using an adaptive step-size sequence (such as the EXP3++
algorithm of~\citet{seldin2014one}) so that on the game with initial
state $s\in\mc{S}$ they achieve regret bound
\begin{equation*}
  \tlO(\sqrt{(A+B)T_s}),
\end{equation*}
where $T_s$ denotes the number of games that has context $s$. Summing
the above over $s\in\mc{S}$ gives the regret bound
\begin{equation*}
  \WeakReg(T) \le \sum_s \tlO(\sqrt{(A+B)T_s}) \le \tlO(\sqrt{S(A+B)T}),
\end{equation*}
as $\sum_s T_s = T$ and thus $\sum_s \sqrt{T_s}\le \sqrt{ST}$ by
Cauchy-Schwarz. 

\qed

\subsection{Proof of Theorem~\ref{theorem:turn-based-mirror-descent}}
We first describe our algorithm for one-step turn-based games ($H=2$.)
Note that this is not equivalent to a zero-sum matrix game, as there
is an unknown transition dynamics involved.


As both the max and min player only have one turn to
play: $\mu=\set{\mu_1}$ and $\nu=\set{\nu_2}$, in this section we
will abuse notation slightly and use $(\mu, \nu)$ to denote
$(\mu_1,\nu_2)$. We will also use $(\mc{A}, \mc{B})$ to denote
$(\mc{A}_1, \mc{B}_2)$ for similar reasons.

We now present our mirror descent based algorithm for one-step
turn-based games. Define the loss estimates
\begin{align}
    & \wt{Q}_1^k(s_1^k, a) \defeq 2 - 
    \frac{\indic{a^k=a}}{\mu^k(a|s_1^k)} \cdot \brac{2 - (r(s_1^k, a) +
      r(s_2^k, b^k))}~~~\textrm{for
      all}~a\in\mc{A}, \label{equation:q1-estimate} \\ 
    & \wt{Q}_2^k(s_2^k, b) \defeq 1 - 
    \frac{\indic{b^k=b}}{\nu^k(b|s_2^k)} \cdot \brac{1 - r(s_2^k,
      b)}~~~\textrm{for
      all}~b\in\mc{B}. \label{equation:q2-estimate} 
\end{align}
The full algorithm is described in
Algorithm~\ref{algorithm:mirror-descent}.

\begin{algorithm*}[t]
   \caption{Mirror descent for one-step turn-based game}
   \label{algorithm:mirror-descent}
   \begin{algorithmic}
     \INPUT Learning rate schedule $(\eta_{+,k}(s), \eta_{-,k}(s))$.
     \STATE {\bf Initialize}: Set $(\mu,\nu)$ to be uniform:
     $\mu(a|s_1)=\frac{1}{A}$ for all $(s_1,a)$ and
     $\nu(b|s_2)=\frac{1}{B}$ for all $(s_2,b)$.
     \FOR{episode $k=1,\dots,K$}
     \STATE Receive $s_1$.
     \STATE Play action $a\sim \mu(\cdot|s_1)$. Observe reward
     $r_1(s_1, a)$ and next state $s_2$.
     \STATE Play action $b\sim \nu(\cdot|s_2)$. Observe reward
     $r_2(s_2, b)$.
     \STATE Compute $\set{\wt{Q}_1^k(s_1^k,a)}_{a\in\mc{A}}$ according
     to~\eqref{equation:q1-estimate} and update
     $$\mu^{k+1}(a|s_1^k)  \propto\mu^k(a|s_1^k) \cdot \exp(\eta_{+,k}(s_1^k)
     \wt{Q}_1^k(s_1^k,a)).$$
     \STATE Compute $\set{\wt{Q}_2^k(s_2^k, b)}_{b\in\mc{B}}$
     according to~\eqref{equation:q2-estimate} and update
     $$\nu^{k+1}(b|s_2^k) \propto 
     \nu^{k}(b|s_2^k) \cdot \exp(-\eta_{-,k}(s_2^k) \wt{Q}_2^k(s_2^k, b)).$$
     \ENDFOR
   \end{algorithmic}
\end{algorithm*}


We are now in position to prove the theorem.

\begin{proof-of-theorem}[\ref{theorem:turn-based-mirror-descent}]
We begin by decomposing the weak regret into two parts:
\begin{align*}
  & \quad \WeakReg(T) = \max_{\mu} \sum_{k=1}^K V_1^{\mu,
    \nu^k}(s_1^k) 
    - \min_{\nu} \sum_{k=1}^K V_1^{\mu^k, \nu}(s_1^k) \\
  & = \underbrace{\max_{\mu} \sum_{k=1}^K V_1^{\mu, \nu^k}(s_1^k) -
    \sum_{k=1}^K V_1^{\mu^k, \nu^k}(s_1^k)}_{\WeakReg_+} +
    \underbrace{\sum_{k=1}^K
    V_1^{\mu^k, \nu^k}(s_1^k) - \min_{\nu} \sum_{k=1}^K
    V_1^{\mu^k, \nu}(s_1^k)}_{\WeakReg_-}.
\end{align*}
In the following, we show that both $\WeakReg_+\le\cO(\sqrt{SAT\iota})$
and $\WeakReg_-\le \cO(\sqrt{SBT\iota})$, 
which when combined gives the desired result.

\paragraph{Bounding $\bf \WeakReg_+$} We first consider the case that the
initial state is fixed, i.e. $s_1^k\equiv s_1$ for
some fixed $s_1\in\mc{S}_1$ and all $k$.
In this case, we have for any $\mu$ that
\begin{align*}
  V_1^{\mu, \nu^k}(s_1) = \sum_{a\in\mc{A}} \mu(a|s_1) Q_1^{\mu,
  \nu^k}(s_1, a) = \<Q_1^{\mu, \nu^k}(s, \cdot),
  \mu(\cdot|s_1)\>_a = \<Q_1^{\cdot, \nu^k}(s_1, \cdot),
  \mu(\cdot|s_1)\>_a.
\end{align*}
Above, the last equality follows by the fact the max player will not
play again after the initial action in one-step games, i.e. 
$Q_1^{\mu,\nu}(s,a)$ does not depend on $\mu$. Applying the
above expression, $\WeakReg_+$ can be rewritten as
\begin{align*}
  \WeakReg_+ = \max_{\mu} \sum_{k=1}^K \<Q_1^{\cdot, \nu^k}(s_1,
  \cdot), \mu(\cdot|s_1)\>_a - \sum_{k=1}^K \<Q_1^{\cdot, \nu^k}(s_1,
  \cdot), \mu^k(\cdot|s_1)\>_a,
\end{align*}
Therefore, bounding $\WeakReg_+$ reduces to solving an online linear
optimization problem over $\Delta_{\mc{A}}$ with bandit
feedback, where at each step we play $\mu^k$ and then suffer a
linear loss with loss vector
$\set{Q_1^{\cdot,\nu^k}(s_1,\cdot)}_{a\in\mc{A}}$.

Now, recall that our loss estimate in~\eqref{equation:q1-estimate},
adapted to the setting that $s_1^k\equiv s_1$ can be written as:
\begin{align*}
  \wt{Q}_1^k(s_1, a) = 2 - \frac{\indic{a^k=a}}{\mu^k(a|s_1)}
  \cdot \brac{2 - (r(s_1, a) + r(s_2^k, b^k))}.
\end{align*}
We now show that this loss estimate satisfies the following properties:
\begin{enumerate}[(1)]
\item Computable: the reward
  $r(s_1,a)$ is seen when $a=a^k$, and the loss estimate is equal to
  2 for all other $a\neq a^k$.
\item Bounded: we have $\wt{Q}_1^k(s_1, a) \le 2$ for all $k$ and $a$.
\item Unbiased estimate of $Q_1^{\cdot, \nu^k}(s_1,\cdot)$. For any
  fixed state $a$, when $a^k=a$ happens, $s_2^k$ is drawn from the
  MDP transition $\P_1(\cdot|s_1, a)$. Therefore, letting
  $\mc{F}_{k-1}$ be the $\sigma$-algebra that encodes all the
  information observed at the end of episode $k-1$, we have that
  \begin{align*}
    \wt{Q}_1^k(s_1, a) | \mc{F}_{k-1} \eqnd 2 -
    \frac{\indic{a^k=a}}{\mu^k(a|s_1)} \cdot \brac{2 - r(s_1, a) -
    r(s_2^{(a)}, b^{(a)})},
  \end{align*}
  where $\eqnd$ denotes equal in distribution,
  $s_2^{(a)}\sim\P_1(\cdot|s_1,a)$ is an ``imaginary'' state had we played
  action $a$ at step 1, and
  $b^{(a)}\sim\nu^k(\cdot|s_2^{(a)})$. Therefore we have
  \begin{align*}
    & \quad \E\brac{\wt{Q}_1^k(s_1, a) \Big| \mc{F}_{k-1}} \\
    & = \E_{a\sim\mu^k(\cdot|s_1)}\brac{ 2 -
      \frac{\indic{a=a}}{\mu^k(a|s_1)} \E_{s_2^{(a)}, b^{(a)}}
      \brac{2 - r(s_1, a) - r(s_2^{(a)}, b^{(a)})}} \\
    & = \E_{s_2^{(a)},b^{(a)}}\brac{ 2 -
      \frac{\mu^k(a|s_1)}{\mu^k(a|s_1)} 
      \cdot \brac{2 - (r(s_1, a) + r(s_2^{(a)}, b^{(a)}))} } \\
    & = \E_{s_2^{(a)},b^{(a)}}
      [r(s_1, a) + r(s_2^{(a)}, b^{(a)})] = Q_1^{\cdot, \nu^k}(s_1,
    a).
  \end{align*}
\item Bounded variance: one can check that
  \begin{align*}
    & \quad \E\brac{ \sum_{a\in\mc{A}} \mu^k(a|s_1)\wt{Q}_1^k(s_1, a)^2
      \big| \mc{F}_{k-1}} \\
    & = 4\sum_{a\in\mc{A}} \mu^k(a|s_1) \paren{1 - \E_{s_2^{(a)},
      b^{(a)}}[2 - r(s_1, a) - r(s_2^{(a)}, b^{(a)})]} \\
    & \qquad +
      \sum_{a\in\mc{A}} \E_{s_2^{(a)}, b^{(a)}}[(2 - r(s_1, a) -
      r(s_2^{(a)}, b^{(a)}))^2]
  \end{align*}
  Letting $p_a\defeq \mu^K(a|s_1)$ and $y_a\defeq 2 - r(s_1, a) -
  r(s_2^{(a)}, b_2^ {(a)})$, we have $y_a\in[0,2]$ almost surely
  (though it is random), and thus
  \begin{align*}
    4\sum_a p_a(1 - y_a) + \sum_a y_a^2  \le 4(1 - \min_a y_a) +
    \sum_a y_a^2 = \sum_{a\neq a_*} y_a^2 + (y_{a*} - 2)^2 \le 4A,
  \end{align*}
  where $a*=\argmin_{a\in\mc{A}} y_a$.
\end{enumerate}

Therefore, adapting the proof of standard regret-based bounds for
the mirror descent (EXP3) algorithm (e.g.~\citep[Theorem
11.1]{lattimore2018bandit}), taking $\eta_+\equiv \sqrt{\log A/AT}$,
we have the regret bound
\begin{equation*}
  \WeakReg_+ \le C\cdot \sqrt{AT\log A},
\end{equation*}
where $C>0$ is an absolute constant.

In the general case where $s_1^k$ are not fixed and can be (in the
worst case) adversarial, the design of
Algorithm~\ref{algorithm:mirror-descent} guarantees that for any
$s\in\mc{S}$, $\mu(\cdot|s)$ gets updated after the $k$-th episode
only if $s_1^k=s$; otherwise the $\mu(\cdot|s)$ is left
unchanged. Therefore, the algorithm behaves like solving $S$
bandit problems independently, so we can sum up all the one-state
regret bounds of the above form and obtain that
\begin{equation*}
  \WeakReg_+ \le \sum_{s\in\mc{S}} C\sqrt{AT_s\log A} \stackrel{(i)}{\le}
  C\sqrt{SAT\log A} = \cO(\sqrt{SAT\iota}).
\end{equation*}
where $T_s\defeq \#\set{k: s_1^k=s}$ denotes the number of occurrences
of $s$ among all the initial states, and (i) uses that $\sum_s T_s=T$
and the Cauchy-Schwarz inequality (or
pigeonhole principle). Note that we does not know $\set{T_s}_{s\in
  \mc{S}}$ before the algorithm starts to play and thus cannot use
$\eta_+(s)= \sqrt{\log A/AT_s}$. We instead use the EXP3++
algorithm~\citep{seldin2014one} whose step-size
$\eta_{+,k}(s) = \sqrt{\log A/AN_k(s)}$ is computable at each episode
$k$. 

\paragraph{Bounding $\WeakReg_-$}
For any $\nu$ define $r(s_2, \nu(s_2))\defeq
\E_{b\sim\nu(\cdot|s_2)}[r(s_2, b)]$ for convenience.
We have
\begin{align*}
  & \quad \WeakReg_- = \sum_{k=1}^K V^{\mu^k, \nu^k}(s_1^k) -
    \min_{\nu}\sum_{k=1}^K V^{\mu^k, \nu}(s_1^k) \\
  & = \sum_{k=1}^K \E_{a\sim\mu^k(\cdot|s_1)} \brac{r(s_1^k,
    a) + \P_1[r(s_2, \nu^k(s_2))](s_1^k, a)} \\
  & \qquad - \min_{\nu}\sum_{k=1}^ K \E_{a\sim\mu^k(\cdot|s_1)}
    \brac{ r(s_1^k, a) + \P_1[r(s_2, \nu(s_2))](s_1^k, a)} \\
  & \stackrel{(i)}{=} \E_{a\sim\mu^k(\cdot|s_1)}
    \brac{ \sum_{k=1}^K r(s_1^k, a) + \P_1[r(s_2,
    \nu^k(s_2))](s_1^k, a)} \\
  & \qquad - \sum_{k=1}^ K \E_{a\sim\mu^k(\cdot|s_1)}
    \brac{ r(s_1^k, a) + \P_1[r(s_2,\nu^\star(s_2))](s_1^k, a)} \\
  & = \sum_{k=1}^K \E_{a\sim\mu^k(\cdot|s_1),s_2\sim
    \P_1(\cdot|s_1^k, a)}
  [r(s_2, \nu^k(s_2)) - r(s_2, \nu^\star(s_2))],
\end{align*}
where (i) follows from the fact that if we define $\nu^\star(s_2) =
\argmin_{b'} r(s_2, b')$, then $\nu^\star$ is optimal at every
state $s_2$ and thus also attains the minimum outside. Defining
$f_k(s_2) = r(s_2, \nu^k(s_2)) - r(s_2, \nu^\star(s_2))$, we have
that $f_k(s_2)\in[0,1]$ and is a fixed function of $s_2$ before
playing episode $k$. Thus, if we define
\begin{equation*}
  \Delta_k = \E_{a,s_2}[f_k(s_2)] - f_k(s_2^k),
\end{equation*}
then $\Delta_k$ is a bounded martingale difference sequence adapted to
$\mc{F}_{k-1}$, so by the Azuma-Hoeffding inequality
we have with probability at least $1-\delta$ that
\begin{equation*}
  \abs{\sum_{k=1}^K \Delta_k} \le C\sqrt{K\log(1/\delta)} =
  C\sqrt{T\log(1/\delta)}.
\end{equation*}
On this event, we have
\begin{align*}
  & \quad \WeakReg_- = \sum_{k=1}^K f_k(s_2^k) + \sum_{k=1}^K \Delta_k
  \\
  & \le \underbrace{\sum_{k=1}^K \brac{r(s_2^k, \nu^k(s_2)) - r(s_2,
    \nu^\star(s_2))}}_{\rm I} + C\sqrt{K\log(1/\delta)}.
\end{align*}
The first term above is the regret for the contextual bandit
problem (with context $s_2$) that the min player faces. Further, the
min player in Algorithm~\ref{algorithm:mirror-descent} plays the
mirror descent (EXP3) algorithm independently for each context $s_2$.
Therefore, by standard regret bounds for mirror descent (e.g. Theorem
11.1,~\cite{lattimore2018bandit}) we have (choosing $\eta_-\equiv\sqrt{\log
  B/T}$ in the fixed $s_2$ case, and using the EXP3++
scheduling~\cite{seldin2014one}) for the contextual case), we have 
\begin{equation*}
  {\rm I} \le \sum_{s\in\mc{S}_2} C\sqrt{BT_s\log B} \le C\sqrt{SBT\log
    B},
\end{equation*}
which combined with the above bound gives that with high probability
\begin{equation*}
  \WeakReg_- \le \cO(\sqrt{SBT\iota}),
\end{equation*}
where $\iota=\log(SABT/\delta)$.
\end{proof-of-theorem}




\section{Subroutine \explore} \label{sec:proof_explore}
In this section, we present the \explore~algorithm, as well as the proofs for Lemma \ref{lem:policy_evaluation}. The algorithm and results presented in this section is simple adaptation of the algorithm in \citet{jin2019reward}, which studies reward-free exploration in the single-agent MDP setting.

Since the guarantee of Lemma \ref{lem:policy_evaluation} only involves the evaluation of the value under fixed policies, it does not matter whether players try to maximize the reward or minimize the reward. Therefore, to prove Lemma \ref{lem:policy_evaluation} in this section, with out loss of generality, we will treat this Markov game as a single player MDP, where the agent take control of both players' actions in MG. 
For simplicity, prove for the case $\cS_1 = \cS_2 = \cdots = \cS_H$, $\cA_1 = \cA_2 = \cdots = \cA_H = \cA$. It is straightforward to extend the proofs in this section to the setting where those sets are not equal.

\begin{algorithm}[tb]
   \caption{\explore} \label{alg:rf_exp}
\begin{algorithmic}[1]
  \STATE {\bfseries Input:} iteration number $N_0$, $N$.
  \STATE set policy class $\Psi \leftarrow \emptyset$, and dataset $\cD \leftarrow \emptyset$.
  \FOR{all $(s, h) \in \cS \times [H]$} \label{line:policy_cover_start}
  \STATE $r_{h'}(s', a') \leftarrow \mathds{1}[s'=s \text{~and~} h'=h]$ for all $(s', a', h') \in \cS \times \cA \times [H]$.\label{line:reward_def}
  \STATE $\Phi^{(s,h)} \leftarrow \EULER(r, N_0)$.
  \STATE $\pi_{h}(\cdot|s) \leftarrow \text{Uniform}(\cA)$ for all $\pi \in \Phi^{(s,h)}$.
  \STATE $\Psi \leftarrow \Psi \cup \Phi^{(s,h)}$.
  \ENDFOR \label{line:policy_cover_end}
  \FOR{$n=1 \ldots N$}\label{line:sample_start}
  \STATE sample policy $\pi \sim \text{Uniform}(\Psi)$.
  \STATE play $\mathcal{M}$ using policy $\pi$, and observe the trajectory $z_n = (s_1, a_1, r_1, \ldots, s_H, a_H, r_H, s_{H+1})$.
  \STATE $\cD \leftarrow \cD \cup \{z_n\}$
  \ENDFOR \label{line:sample_end}
  \FOR{all $(s, a, h) \in \cS \times \cA \times [H]$} \label{line:empricial_start}
  \STATE $N_h(s, a) \leftarrow \sum_{(s_h, a_h) \in \cD} \mathds{1}[s_h = s, a_h = a]$.
  \STATE $R_h(s, a) \leftarrow \sum_{(s_h, a_h, r_h) \in \cD} r_h \mathds{1}[s_h = s, a_h = a]$.
  \STATE $\hat{r}_h(s, a)\leftarrow R_h(s, a)/N_h(s, a)$.
  \FOR{all $s' \in \cS$}
  \STATE $N_h(s, a, s') \leftarrow \sum_{(s_h, a_h, s_{h+1}) \in \cD} \mathds{1}[s_h = s, a_h = a, s_{h+1} = s'] $.
  \STATE $\hat{\P}_h(s'|s, a) \leftarrow N_h(s,a, s')/N_h(s, a)$.
  \ENDFOR
  \ENDFOR\label{line:empirical_end}
  \STATE {\bfseries Return:} empirical transition $\hat{\P}$, empirical reward $\hat{r}$.
\end{algorithmic}
\end{algorithm}

The algorithm is described in Algorithm \ref{alg:rf_exp}, which consists of three loops. The first loop computes a set of policies $\Psi$. By uniformly sampling policy within set $\Psi$, one is guaranteed visit all ``significant'' states with reasonable probabilities. The second loop simply collecting data from such sampling procedure for $N$ episodes. The third loop computes empirical transition and empirical reward by averaging the observation data collected in the second loop. We note Algorithm \ref{alg:rf_exp} use subroutine \EULER, which is the algorithm presented in \citet{zanette2019tighter}.

We can prove the following lemma, where Lemma \ref{lem:policy_evaluation} is a direct consequence of Lemma \ref{lem:plan}.
\begin{lemma}\label{lem:plan}
There exists absolute constant $c>0$, for any $\epsilon>0$, $p\in(0, 1)$,  if we set $N_0\ge cS^{3}AH^6 \iota^3/\epsilon$, and $N \ge cH^{5}S^2A\iota/\epsilon^2$ where $\iota \defeq \log(SAH/(p\epsilon))$, then with probability at least $1-p$, for any policy $\pi$:
\begin{equation*}
|\hat{V}^{\pi}_{1}(s_1) -V^{\pi}_{1}(s_1)| \le \epsilon/2
\end{equation*}
where $\hat{V}, V$ are the value functions of $\mdp(\hat{\P}, \hat{r})$ and $\mdp(\P, r)$, and $(\hat{\P}, \hat{r})$ is the output of the algorithm \ref{alg:rf_exp}.
\end{lemma}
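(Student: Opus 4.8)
The plan is to adapt the reward-free exploration analysis of \citet{jin2019reward}. As the statement concerns only the \emph{evaluation} of fixed policies, I follow the reduction already noted in the text and treat the game as a single-agent MDP whose action set is $\cA$ (playing the role of $\cA\times\cB$ in the game). The backbone is the value-difference identity: for every policy $\pi$,
\[
  \hat V^\pi_1(s_1) - V^\pi_1(s_1) = \sum_{h=1}^H \E_\pi\brac{ (\hat r_h - r_h)(s_h,a_h) + \big((\hat\P_h - \P_h)\hat V^\pi_{h+1}\big)(s_h,a_h) },
\]
where the expectation is over trajectories drawn by $\pi$ under the \emph{true} dynamics $\P$. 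It therefore suffices to control, uniformly over $\pi$, the per-step reward and transition estimation errors weighted by the true state-visitation $d^\pi_h(s)\defeq\Pr_\pi[s_h=s]$. Because the bound is required for \emph{all} $\pi$ at once and $\hat V^\pi_{h+1}$ is data-dependent, I would make the transition concentration uniform over the whole value class $\set{V:\cS\to[0,H]}$ through an $\eps$-net argument, exactly as in Lemma~\ref{lem:uniform_concern_sim}.

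First I would partition, at each step $h$, the states into \emph{significant} and \emph{insignificant} ones according to the maximal reaching probability $d_h(s)\defeq\max_\pi\Pr_\pi[s_h=s]$, with threshold $\delta\asymp\epsilon/(H^2 S)$. For an insignificant state ($d_h(s)\le\delta$) the crude bounds $d^\pi_h(s)\le d_h(s)\le\delta$ together with $\hat V^\pi,V^\pi\in[0,H]$ show that all insignificant states together contribute at most $\cO(H^2 S\delta)\le\epsilon/4$ to the identity above, regardless of $\pi$. All the remaining work is at the significant states, where the exploration phase must have produced enough samples.

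For the significant states I would invoke the guarantee of \EULER. Running \EULER~on the artificial reachability reward of line~\ref{line:reward_def} (whose optimal return equals $d_h(s)$) for $N_0$ episodes yields, through its \emph{variance-aware / first-order} regret bound, a cover $\Phi^{(s,h)}$ whose uniform mixture reaches $(s,h)$ with probability at least $d_h(s)/2$; note only a \emph{constant-factor} approximation of $d_h(s)$ is needed, which is why $N_0$ scales with $1/\delta\le H^2S/\epsilon$ (times \EULER's regret coefficients and union-bound factors, giving the stated $N_0\ge cS^3AH^6\iota^3/\epsilon$) rather than with $1/\epsilon^2$. Since each cover policy plays uniformly at its target state, drawing $\pi\sim\Unif(\Psi)$ reaches any significant $(s,a,h)$ with probability at least $\Omega(d_h(s)/(SHA))$; a Chernoff bound on the counts then gives $N_h(s,a)\ge \Omega(N d_h(s)/(SHA))$ for all significant $(s,a,h)$ with high probability. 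Feeding these counts into a \emph{Bernstein}-type transition bound $\abs{\big((\hat\P_h-\P_h)\hat V^\pi_{h+1}\big)(s,a)}\le \cO\big(\sqrt{\Var_{\P_h}(\hat V^\pi_{h+1})\iota/N_h(s,a)}\big)$ plus lower-order terms, and combining the count lower bound, the Cauchy--Schwarz estimate $\sum_s d^\pi_h(s)/d_h(s)\le S$, and the law of total variance $\sum_h\E_\pi[\Var_{\P_h}(\hat V^\pi_{h+1})]\le\cO(H^2)$, I would bound the significant contribution by $\cO(\sqrt{H^5 S^2 A\iota/N})$. This is $\le\epsilon/4$ once $N\ge cH^5S^2A\iota/\epsilon^2$; the reward-error term is handled identically and is lower order. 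Adding the two parts yields $\abs{\hat V^\pi_1(s_1)-V^\pi_1(s_1)}\le\epsilon/2$.

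The hard part is securing the \emph{sharp} $S^2$ (rather than $S^3$) dependence. A naive $\ell_1$/Hoeffding transition bound loses an extra factor of $S$, so the variance-aware chain (Bernstein concentration plus the law of total variance) is indispensable; the delicate point is to arrange the $\eps$-net so that the covering penalty inflates only the \emph{lower-order} Bernstein terms and not the leading variance term, while the concentration still holds uniformly over all data-dependent $\hat V^\pi_{h+1}$. A secondary difficulty is converting \EULER's regret guarantee on each reachability reward into a reaching-probability lower bound that holds \emph{simultaneously} for all significant $(s,h)$, which is where the $\iota^3$ and the extra $S,H$ factors in $N_0$ originate.
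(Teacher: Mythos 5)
Your proposal takes the same route as the paper, whose entire proof of this lemma is a deferral to \citet{jin2019reward}: the paper states only that the argument is ``almost the same as the proof of Lemma 3.6'' there, adding the remark that the error from estimating $\hat{r}$ is of the same or lower order than the transition error and is handled by the same treatment. Your sketch reconstructs the structure of that cited argument --- the value-difference (simulation) identity, the significant/insignificant state split, the \EULER-based policy cover giving counts $N_h(s,a) = \Omega(N d_h(s)/(SHA))$ at significant states, and a variance-aware concentration step to reach the $H^5S^2A$ leading term --- and your accounting of why $N_0$ scales as $1/\epsilon$ (with $\iota^3$ and extra $S,H$ factors) is accurate.

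One concrete caveat about the step you yourself flag as delicate: a plain $\eps$-net over $\{V:\cS\to[0,H]\}$ cannot be ``arranged'' so that the covering penalty inflates only the lower-order terms. Bernstein applied to each net element, with failure probability divided by the covering number, necessarily carries $\log(\text{covering number}) \approx S\log(H/\eps)$ \emph{inside the square root multiplying the variance}; this is exactly the price paid by Lemma~\ref{lem:uniform_concern_sim}, and pushing such a bound through your Cauchy--Schwarz/total-variance chain yields $S^3$, not $S^2$ --- precisely the loss you are trying to avoid. The device that actually removes the extra $S$ (in \citet{jin2019reward}, in the same spirit as \citet{azar2017minimax}) is a reference-function decomposition $(\hat{\P}_h-\P_h)\hat{V}^\pi_{h+1} = (\hat{\P}_h-\P_h)V^\pi_{h+1} + (\hat{\P}_h-\P_h)(\hat{V}^\pi_{h+1}-V^\pi_{h+1})$: the first term admits a concentration bound free of the $S$ factor, while the second is a product of two small quantities (an $\ell_1$ deviation of order $\sqrt{S\iota/N_h(s,a)}$ times a bootstrapped $\ell_\infty$ error), and it is this cross term that generates the lower-order $1/\epsilon$ contribution visible in Theorem~\ref{thm:PAC_explore}. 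Apart from this mechanism --- which the paper never spells out, since it only cites the source --- your outline is as detailed as, indeed more detailed than, the paper's own proof.
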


\begin{proof}
The proof is almost the same as the proof of Lemma 3.6 in \citet{jin2019reward} except that there is no error in estimating $r$ in \citet{jin2019reward}. We note the error introduced by the difference of $\hat{r}$ and $r$ is a same or lower order term compared to the error introduced by the difference of $\hat{\P}$ and $\P$. We can bound the former error using the similar treatment as in bounding the latter error. This finishes the proof.
\end{proof}

\section{Connection to Algorithms against Adversarial Opponents and \textsc{R-max}}
\label{app:connection}

Similar to the standard arguments in online learning, we can use any algorithm with low regret against adversarial opponent in Markov games to design a provable self-play algorithm with low regret.

Formally, suppose algorithm $\cA$ has the following property. The max-player runs algorithm $\cA$ and has following guarantee:
\begin{equation} \label{eq:regret_guarantee}
    \max_{\mu} \sum_{k=1}^K V_1^{\mu, \nu^k}(s_1^k) -
    \sum_{k=1}^K V_1^{\mu^k, \nu^k}(s_1^k)  \le f(S, A, B, T)
\end{equation}
where $\{\mu_k\}_{k=1}^K$ are strategies played by the max-player, $\{\nu_k\}_{k=1}^K$ are the possibly adversarial strategies played by the opponent, and function $f$ is a regret bound depends on $S, A, B, T$. Then, by symmetry, we can also let min-player runs the same algorithm $\cA$ and obtain following guarantee:
\begin{equation*}
  \sum_{k=1}^K
    V_1^{\mu^k, \nu^k}(s_1^k) - \min_{\nu} \sum_{k=1}^K
    V_1^{\mu^k, \nu}(s_1^k) \le f(S, B, A, T).
\end{equation*}
This directly gives a self-play algorithm with following regret guarantee
\begin{align*}
&\WeakReg(T) = \max_{\mu} \sum_{k=1}^K V_1^{\mu,
    \nu^k}(s_1^k) 
    - \min_{\nu} \sum_{k=1}^K V_1^{\mu^k, \nu}(s_1^k) \\
     =& \max_{\mu} \sum_{k=1}^K V_1^{\mu, \nu^k}(s_1^k) -
    \sum_{k=1}^K V_1^{\mu^k, \nu^k}(s_1^k) + \sum_{k=1}^K
    V_1^{\mu^k, \nu^k}(s_1^k) - \min_{\nu} \sum_{k=1}^K
    V_1^{\mu^k, \nu}(s_1^k) \le f(S, A, B, T) + f(S, B, A, T)
\end{align*}

However, we note there are two notable cases, despite they are also results with guarantees against adversarial opponent, their regret are not in the form \eqref{eq:regret_guarantee}, thus can not be used to give self-play algorithm, and obtain regret bound in our setting.

The first case is \textsc{R-max} algorithm \cite{brafman2002r}, which studies Markov games, with guarantees in the following form.
\begin{equation*}
\sum_{k=1}^K V_1^{\mu^\star, \nu^\star}(s_1^k) - \sum_{k=1}^K V_1^{\mu^k, \nu^k}(s_1^k) \le  g(S, A, B, T)
\end{equation*}
where $\{\mu_k\}_{k=1}^K$ are strategies played by the max-player, $\{\nu_k\}_{k=1}^K$ are the adversarial strategies played by the opponent, $(\mu^\star, \nu^\star)$ are the Nash equilibrium of the Markov game, $g$ is a bound depends on $S, A, B, T$. We note this guarantee is weaker than \eqref{eq:regret_guarantee}, and thus can not be used to obtain regret bound in the setting of this paper.

The second case is algorithms designed for adversarial MDP \citep[see e.g.][]{zimin2013online, rosenberg2019online, jin2019learning}, whose adversarial opponent can pick adversarial reward function. We note in Markov games, the action of the opponent not only affects the reward received but also affects the transition to the next state. Therefore, these results for adversaril MDP with adversarial rewards do not directly apply to the setting of Markov game.

\end{document}